\numberwithin{equation}{section}
\def\C{\overline{\mathbb{C}}}
\def\R{\mathbb{R}}
\def\d{\mathrm{d}}
\def\Mob{{\text{\tiny M\"ob}}}
\def\F{\mathcal{F}}
\def\FCR{\mathrm{FCR}}
\def\CR{\mathrm{CR}}
\def\SCR{\mathrm{SCR}}
\newtheorem{Theorem}{Theorem}[section]
\newtheorem{Proposition}[Theorem]{Proposition}
 { \theoremstyle{definition}
\newtheorem{Definition}[Theorem]{Definition}
\newtheorem{Example}[Theorem]{Example}}
\begin{document}

\allowdisplaybreaks

\newcommand{\arXivNumber}{1603.09335}

\renewcommand{\PaperNumber}{080}

\FirstPageHeading

\ShortArticleName{M\"obius Invariants of Shapes and Images}

\ArticleName{M\"obius Invariants of Shapes and Images}

\Author{Stephen MARSLAND~$^\dag$ and Robert I.~MCLACHLAN~$^\ddag$}

\AuthorNameForHeading{S.~Marsland and R.I.~McLachlan}

\Address{$^\dag$~School of Engineering and Advanced Technology, Massey University,\\
\hphantom{$^\dag$}~Palmerston North, New Zealand}
\EmailD{\href{mailto:s.r.marsland@massey.ac.nz}{s.r.marsland@massey.ac.nz}}
\URLaddressD{\url{http://www.stephenmonika.net}}

\Address{$^\ddag$~Institute of Fundamental Sciences, Massey University, Palmerston North, New Zealand}
\EmailD{\href{mailto:r.mclachlan@massey.ac.nz}{r.mclachlan@massey.ac.nz}}
\URLaddressD{\url{http://www.massey.ac.nz/~rmclachl/}}

\ArticleDates{Received April 01, 2016, in f\/inal form August 08, 2016; Published online August 11, 2016}

\Abstract{Identifying when dif\/ferent images are of the same object despite changes caused by imaging technologies, or processes such as growth, has many applications in f\/ields such as computer vision and biological image analysis. One approach to this problem is to identify the group of possible transformations of the object and to f\/ind invariants to the action of that group, meaning that the object has the same values of the invariants despite the action of the group. In this paper we study the invariants of planar shapes and images under the M\"obius group $\mathrm{PSL}(2,\mathbb{C})$, which arises in the conformal camera model of vision and may also correspond to neurological aspects of vision, such as grouping of lines and circles. We survey properties of invariants that are important in applications, and the known M\"obius invariants, and then develop an algorithm by which shapes can be recognised that is M\"obius- and reparametrization-invariant, numerically stable, and robust to noise. We demonstrate the ef\/f\/icacy of this new invariant approach on sets of curves, and then develop a M\"obius-invariant signature of grey-scale images.}

\Keywords{invariant; invariant signature; M\"{o}bius group; shape; image}

\Classification{68T45; 68U10}

\section{Introduction}

Lie group methods play a fundamental role in many aspects of computer vision and image processing, including object recognition, pattern matching, feature detection, tracking, shape analysis, tomography, and geometric smoothing. We consider the setting in which a Lie group~$G$ acts on a space $M$ of objects such as points, curves, or images, and convenient methods of working in~$M/G$ are sought. One such method is based on the theory of invariants, i.e., on the theory of $G$-invariant functions on~$M$, which has been extensively developed from mathematical, computer science, and engineering points of view.

When $M$ is a set of planar objects the most-studied groups are the Euclidean, af\/f\/ine, similarity, and projective groups. In this paper we make a f\/irst study of invariants of planar objects under the M\"obius group $\mathrm{PSL}(2,\mathbb{C})$, which acts on the Riemann sphere
$\C = \mathbb{C}\cup\infty$ by
\begin{gather*}
 \phi\colon \ \C \to \C,\qquad \phi(z) = \frac{a z + b}{c z + d},\qquad a, b, c, d\in\mathbb{C}, \qquad
ad-bc\ne 0.
\end{gather*}
We work principally in the school of invariant signatures, developed by (amongst others) Olver and Shakiban
\cite{ames2002three,
calabi1998differential,
hoff2013extensions,
olver2001moving,
shakiban2004signature,
shakiban2005classification}, and widely used for Euclidean object recognition (see, e.g.,~\cite{aghayan2014planar}).

$\mathrm{PSL}(2,\mathbb{C})$ is a 6-dimensional real Lie group. It forms the identity component of the inversive group, which is the group generated by the M\"obius transformations and a ref\/lection. In addition to its importance on fundamental grounds~-- it is one of the very few classes of Lie groups that act on the plane, it crops up in numerous branches of geometry and analysis, it is the smallest nonlinear planar group that contains the direct similarities, and it is the set of biholomorphic maps of the Riemann sphere~-- it also has direct applications in image processing since it arises in the {\em conformal camera} model of vision, in which scenes are projected radially onto a sphere \cite{lenz1990group,turski2004geometric,turski2005geometric}. It may also correspond to neurological aspects of vision, such as grouping of lines and circles (which are equivalent under M\"obius transformations)~\cite{turski2006computational}.

From an applications point of view, dif\/ferent objects may be related by M\"obius transformations, as is explored by
Petukhov \cite{petukhov1989non} for biological objects in fascinating detail in the context of Klein's Erlangen program\footnote{D'Arcy Wentworth Thompson \cite{thompson1942growth} famously deformed images of one species to match those of another; his theory of transformations is reviewed and interpreted in light of modern biology in \cite{arthur2006d}. In particular, it has given rise to image processing techniques such as {\em Large Deformation Diffeomorphic Metric Mapping $($LDDMM$)$} \cite{glaunes2008large}, in which images are compared modulo {\em infinite}-dimensional groups such as the dif\/feomorphism group. Petukhov writes that Thompson ``did not use the Erlanger program as the basis in this comparative analysis.'' However, the totality of Thompson's examples and the explanations in his text do indicate that in all cases he selected his transformations from the simplest group that would do an (in his view) acceptable job. Eight dif\/ferent groups are identif\/ied in \cite[Table~1]{marsland2013geodesic}; four are f\/inite-dimensional. Thus, whether consciously or not, Thompson's work was fully consistent with the Erlangen program. Of relevance to the present paper is that many of his examples use conformal mappings, and thus may be approximated (or even determined by) M\"obius transformations.}, giving examples of many body parts that are loxodromic to high accuracy (loxodromes have constant M\"obius curvature and play the role in M\"obius geometry that circles do in Euclidean geometry), that change shape by M\"obius and other Lie group actions, and that grow via M\"obius transformations (including 2D representations of the human skull). Other examples of 1-dimensional growth patterns, such as antenatal and postnatal human growth, appear to be well modelled by 1-dimensional linear-fractional transformations $x\mapsto (ax+b)/(cx+d)$. Discussing this work, Milnor~\cite{milnor2010growth} argued that ``The geometrically simplest way to change the relative size of dif\/ferent body parts would be by a~conformal transformation. It seems plausible that this simplest solution will often be the most ef\/f\/icient, so that natural selection tend to choose it''. (Milnor was thinking of~3D conformal transformations, whose restriction to~2D is the M\"obius transformations.)

\looseness=-1 In this paper we present an integral invariant for the 2D M\"{o}bius group that is reparameterization independent, and demonstrate its use to identify curves that are related by M\"{o}bius transformations. We then consider the case of images, and describe an invariant signature by which M\"{o}bius transformed images can be recognised. We begin in Section \ref{sec:invariants} by discussing the desirable properties of invariants for object classif\/ication, and introduce the property of bounded distortion, which subsumes most of the requirements identif\/ied in the literature. This is followed by an overview of the methods that have been used to give object classif\/ication invariants for curves and images (most often with respect to the Euclidean, similarity, and af\/f\/ine groups).

In Section \ref{sec:eg} we present the classical M\"obius invariants and discuss their utility with respect to the properties identif\/ied in Section~\ref{sec:invariants}, before using a numerical example based on an ellipse to demonstrate the dif\/ference between them. This is followed by the introduction of the invariant that we have identif\/ied as the best behaved for curves with respect to the requirements previously discussed. In order to demonstrate its utility, we present an experiment where a set of smooth Jordan curves are created, and then the invariant distance is computed, and compared with direct registration of each pair of curves in the M\"{o}bius group using the $H^1$ similarity metric. The results show that the invariant is well-behaved with respect to noise, and can be used to separate all but the most similar shapes.

In Section \ref{sec:images} we move on to images and demonstrate the use of a~3D M\"{o}bius signature that is very sensitive and relatively cheap to compute, while still being more robust than the analogous signature for curves. As far as we know, such dif\/ferential invariant signatures (in the sense of Olver~\cite{olver2001moving}) for images have not been considered previously.

\section{Invariants of shapes and images}\label{sec:invariants}
\subsection{Computational requirements of invariants for object classif\/ication}\label{sec:req}

The mathematical def\/inition of an invariant, namely, a $G$-invariant function on $M$, is not suf\/f\/iciently strong for many computational applications. For example, object classif\/ication via invariants involves comparing the values of the invariant on dif\/ferent $G$-orbits, and the def\/inition says nothing about this. Recognising this, Ghorbel~\cite{ghorbel1994complete} has given the following partial list of qualities needed for object recognition:
\begin{itemize}\itemsep=0pt
\item[(i)] fast computation;
\item[(ii)] good numerical approximation;
\item[(iii)] powerful discrimination (if two objects are far apart modulo $G$, their invariants should be far apart);
\item[(iv)] completeness (two objects should have the same invariants if\/f they are the same modulo~$G$);
\item[(v)] provide a $G$-invariant distance on $M$; and
\item[(vi)] stability (if two objects have nearby invariants, they should be nearby modulo~$G$).
\end{itemize}

Calabi et al.~\cite{calabi1998differential} include in (ii) the requirement that the numerical approximation itself should be $G$-invariant, while Abu-Mostafa et al.~\cite{abu1984recognitive} add the further desirable qualities of:
\begin{itemize}\itemsep=0pt
\item[(vii)] robustness (if two objects are nearby modulo $G$, especially when one is a noisy version of the other, their invariants should be nearby);
\item[(viii)] lack of redundancy (i.e., all invariants are independent); and
\item[(ix)] lack of suppression (in which the invariants are insensitive to some features of the objects).
\end{itemize}

Manay \cite{manay2006integral} includes, in addition:
\begin{itemize}\itemsep=0pt
\item[(x)] locality (which allows matching subparts and matching under occlusion).
\end{itemize}

To this already rather demanding list we add one more, that the set of invariants should be:
\begin{itemize}\itemsep=0pt
\item[(xi)] small.
\end{itemize}

\begin{figure}[t]\centering
\includegraphics[width=11.5cm]{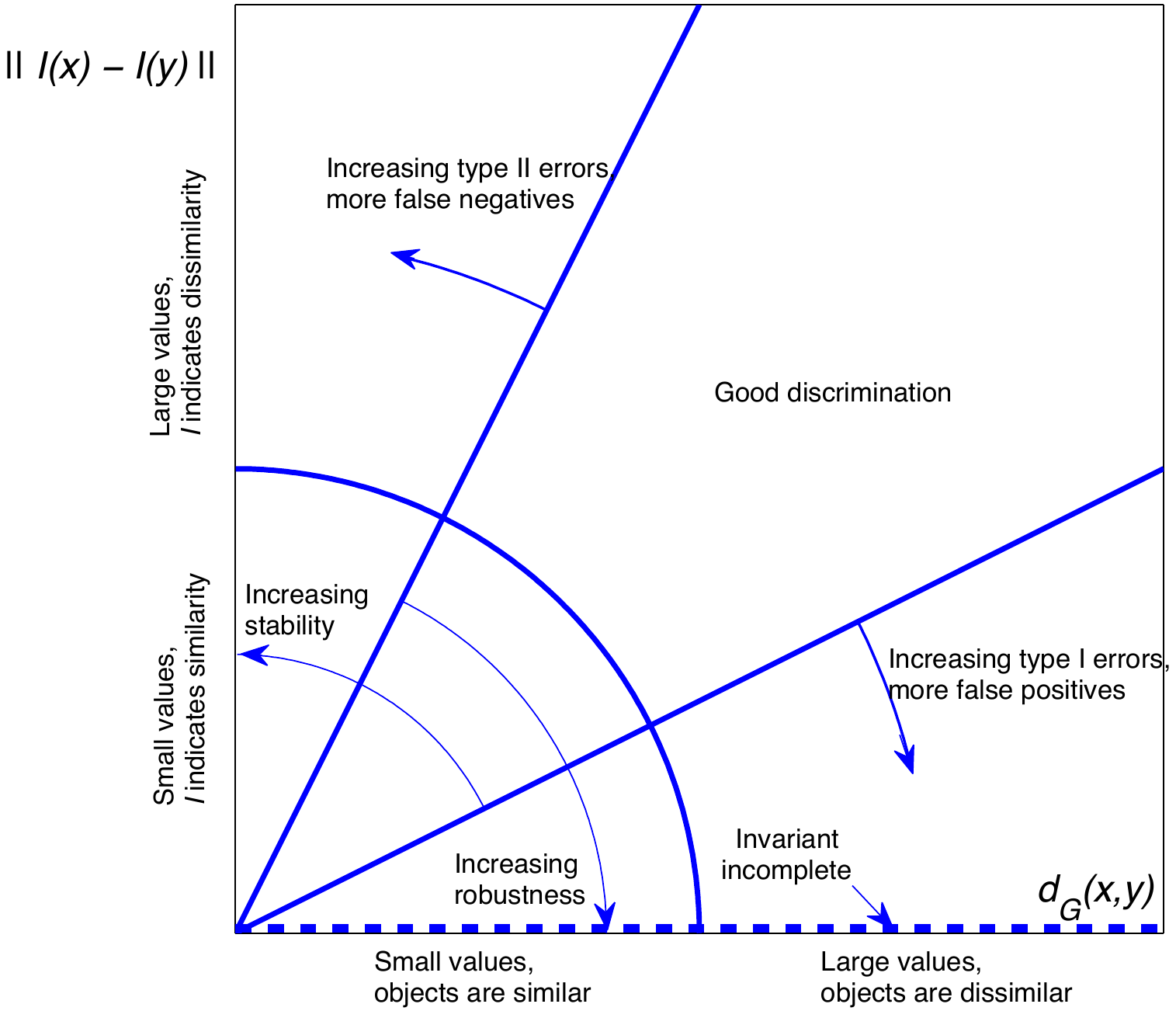}
\caption{Sketch of the approximate relationships between $d_G(x,y)$, which measures the distance between two objects $x$ and $y$ modulo a symmetry group $G$, and $\|I(x)-I(y)\|$, which measures the distance between their invariants, illustrating the role of the desired properties of completeness, robustness, stability, and discrimination. In object recognition, `good discrimination' is sometimes taken to mean the avoidance of type I errors (relative to the null hypothesis that two objects are in the same group orbit, so that in a type~I error dissimilar objects are classif\/ied as similar), the avoidance of type~II errors (similar objects being classif\/ied as dissimilar), or is not specif\/ied; we take it to mean that both type~I and type~II errors are avoided.}\label{fig:diagram}
\end{figure}

The motivation for this last is purely a parsimony argument; it is easy to produce large sets of invariants without adding much utility, and smaller sets of easily computable invariants are to be preferred. This is particularly the case since some of these criteria are in conf\/lict, so there will usually not be one invariant that is preferred for all applications; instead, the best choice will depend on the dataset and the particular application. Others are closely related, especially discrimination, distance, stability, robustness, and suppression; these all depend on which features of the objects are deemed to be signal and which are deemed to be noise.

These criteria can be unif\/ied and quantif\/ied in the following way: suppose that a distance on objects has been chosen that measures the features that we are interested in (the signal), and that is small for dif\/ferences that we are not interested in (the noise). This distance induces a~distance on objects modulo~$G$ by (where $\| \cdot \|_d$ is some appropriately chosen norm):
\begin{gather*}
 d_G(x, y) := \max\big(\inf_{g\in G} \|g\cdot x - y\|_{d},\, \inf_{g\in G}\|g\cdot y - x\|_{d}\big).
 \end{gather*}
(In principle one can compute $d_G(x,y)$ by optimizing over $G$, and indeed this is done in many applications. However, optimization is relatively dif\/f\/icult and unreliable, and becomes impractical on large sets of objects; this is the step that invariants are intended to avoid.) Suppose that a norm on invariants $I\colon M\to \mathbb{R}^k$ has been chosen.
Then we can express the criteria as follows:
\begin{itemize}\itemsep=0pt
\item[(iii)] discrimination: $d_G(x,y)\gg \varepsilon \Leftrightarrow \|I(x)-I(y)\| \gg \varepsilon$,
\item[(iv)] completeness: $\|I(x)-I(y)\| = 0 \Rightarrow d_G(x,y)=0$,
\item[(vi)] stability: $\|I(x)-I(y)\| \lesssim \varepsilon \Rightarrow d_G(x,y) \lesssim \varepsilon$,
\item[(vii)] robustness: $d_G(x,y) \lesssim \varepsilon \Rightarrow \| I(x)-I(y) \| \lesssim \varepsilon$.
\end{itemize}

These may be subsumed and strengthened by the property of {\em bounded distortion}: an invariant has bounded distortion with respect to $d_G$ if there exist positive constants $c_1$, $c_2$ such that for all $x,y\in M$ we have
\begin{gather}\label{eq:boundeddistortion}
c_1 d_G(x,y) \le \| I(x) - I(y) \| \le c_2 d_G(x,y).
\end{gather}

Suppose the invariant is to be used to test the hypothesis that two objects are the same modulo symmetry group~$G$.
The situation is illustrated in Fig.~\ref{fig:diagram}, which plots the distance between the invariants of two objects against the distance between the objects modulo~$G$. The smaller the value of $c_2$, the more robust the invariant is, and the fewer false positives will be reported. The larger the value of~$c_1$, the more stable the invariant is, the better its discrimination, and the fewer false negatives will be reported. If~(\ref{eq:boundeddistortion}) holds only with $c_1=0$, the invariant is neither stable nor complete; if there is no~$c_2$ such that~(\ref{eq:boundeddistortion}) holds, the invariant is not robust. In addition, in some applications one may be more interested in some parts of the space shown in Fig.~\ref{fig:diagram} than others: for example, in discriminating very similar or very dissimilar objects.

In practice, bounded distortion may not be possible. Even completeness, the centrepiece of the mathematical theory of invariants, can be very demanding. It turns out that many invariants that are roughly described as `complete' are not fully complete, that is, they do not distinguish {\em all} group orbits, but only distinguish {\em almost all} group orbits. For example, Ghorbel's Euclidean invariants of grey-level images~\cite{ghorbel1994complete} only distinguish images in which two particular Fourier coef\/f\/icients are non-zero. Thus, it will be stable only on images in which these two Fourier coef\/f\/icients are bounded away from zero. The most commonly-used Euclidean signature of curves, $(\kappa,\kappa_s)$, is complete only on nondegenerate curves~\cite{hickman2012euclidean}. The fundamental issue is that even for very simple group actions, the space of orbits can be very complicated topologically, and thus very dif\/f\/icult (or even impossible) to coordinatize via invariants. We illustrate these ideas via an example:

\begin{Example}\label{ex:npts} Consider $n$ points $z_1,\dots,z_n$ in the complex plane. Let $G=S^1$ act by rotating the points, i.e., $e^{i\theta}\cdot(z_1,\dots,z_n) = (e^{i\theta}z_1,\dots,e^{i\theta} z_n)$. Then $\{\bar z_i z_j\colon 1\le i\le j \le n\}$ forms a~complete set of invariants. It is very large (there are $n^2$ real components) compared to the dimension $2n-1$ of the space, $\mathbb{C}^n / S^1$, in which we are trying to work. However, if any one real component of the set is omitted, the resulting set is not complete. For example, if $|z_1|^2$ is omitted, then the points $(1,0,\dots,0)$ and $(2,0,\dots,0)$ have the same invariants, but do not lie on the same orbit. This is the situation considered in the problem of {\em phase retrieval}~\cite{bandeira2014saving}. By choosing combinations of $\{\bar z_i z_j\}$ it is possible to create smaller sets of complete invariants, but the computational complexity of calculating the description of the $n$ points is still $\mathcal{O}(n^2)$~\cite{bandeira2014saving}.
\end{Example}

In cases where one settles for an incomplete set of invariants, or a set that is not of bounded distortion~-- so that the worst-case behaviour of the invariants is arbitrarily poor~-- it can make sense instead to study the average-case behaviour of the invariants over some distribution of the objects. This is analogous to the role that ill-posed and ill-conditioned problems play in numerical linear algebra, although in that case the ill-conditioning is intrinsic rather than imposed by considerations of computational complexity. Indeed, one can sometimes very usefully describe objects based on an extremely small number of invariants, for example, describing planar curves by their length and enclosed area, or by the f\/irst few Fourier moments of their curvature. What is wanted is to optimize the behaviour of the invariants, over some distribution of the objects, with respect to their time and/or space complexity. However, we know of no genuine cases in which such a program has been carried out.

\subsection{Invariants of curves}\label{sec:invariants2}

There is a large literature concerning invariants to the Euclidean and similarity groups for both curves and images. The purpose of this section is to provide an overview of the dominant themes within that work that are relevant to our goal of identifying invariants for the M\"{o}bius group that satisfy at least some of the criteria listed in the previous section.

We def\/ine a (closed) shape as the image of a function $\phi\colon S^1\to\R^2$. For dif\/ferent restrictions on $\phi$ this def\/ines dif\/ferent spaces: if $\phi$ is a continuous injective mapping then this def\/ines simple closed curves, while if $\phi$ is dif\/ferentiable and $\phi'(t)\ne 0$ for all $t$ this def\/ines the `shape space' $\operatorname{Imm}(S^1,\R^2)/\operatorname{Dif\/f}(S^1)$~\cite{Mumford98}, while if $\phi$ is an immersion and $\phi(S^1)$ is dif\/feomorphic to $S^1$, we obtain the shape space $\operatorname{Emb}(S^1,\R^2)/\operatorname{Dif\/f}(S^1)$ (roughly, the curve has no self-intersections). There are also smooth ($C^\infty$) versions of these, piecewise smooth versions, shapes of bounded variation, and so on. The geometry of these shape spaces is now studied intensively in its own right~\cite{michor1980manifolds,michor2007overview} as well as as a setting for computer vision; see~\cite{bauer2013overview} for a recent review.

The quotient by $\operatorname{Dif\/f}(S^1)$ in the shape spaces has the ef\/fect of factoring out the dependence on the parameterization. Various methods have been used to achieve this, including:
\begin{enumerate}\itemsep=0pt
\item Using a standard parameterization, such as Euclidean arclength. This leaves a dependence on the choice of starting point, i.e., the subgroup of $\operatorname{Dif\/f}(S^1)$ consisting of translations is not removed.
\item Moment invariants, $\int_0^L f(\phi(s))\, \d s$, where $L$ is the length of the curve, $s$ is Euclidean arclength and $f$ ranges over a set of basis functions, such as monomials or Fourier modes~\cite{abu1984recognitive}.
\item Currents $\int_{S^1} \phi^*\alpha$, where $\alpha$ ranges over a set of basis 1-forms on $\R^2$, such as monomials times $\d x$ and monomials times~$\d y$~\cite{glaunes2008large}.
\end{enumerate}

The next step is to consider a Lie group $G$ acting on $\R^2$ that induces an action on the shape space.
Here, many methods and types of invariants have been investigated (see, e.g.,~\cite{van1995vision}).
\begin{description}\itemsep=0pt
\item[The moving frame method] is a general approach to constructing invariants. Objects are put into a reference conf\/iguration and their resulting coordinates are then invariant. The mo\-ving frame or reference conf\/iguration method was developed as a way of {\em finding} dif\/ferential invariants and variants of them (see~\cite{Olver2005} for an overview). A~simple application of the method is the situation considered in Example~\ref{ex:npts}, of $n$ points in the plane under rotations. Consider conf\/igurations with $z_1\ne 0$, $|\arg z_1|<\pi$. Rotate the conf\/iguration so that~$z_1$ lies on the positive real axis. The coordinates of the resulting reference conf\/iguration, $\bar z_1 z_j / |z_1|^2$, are invariant. In this case, the invariants are well behaved as~$|\arg z_1|\to \pi$, but not as $z_1\to 0$.
\item[Joint invariants] are functions of several points; for example, pairwise distances for a complete set of joint invariants for the action of the Euclidean group on sets of points in the plane.
\item[Dif\/ferential invariants] are functions of the derivatives of a curve at a point. There exist algorithms to generate all dif\/ferential invariants~\cite{Olver2005}. For the action of the Euclidean group on planar curves, the Euclidean curvature $\kappa = \phi'\times \phi''/\|\phi'\|^3$ is $E(n)$ invariant (and parameterization invariant), and its derivatives $d^n\kappa/ds^n$ with respect to arclength form a~complete set of dif\/ferential invariants.
\item[Semi-dif\/ferential invariants] (also known as joint dif\/ferential invariants~\cite{olver01joint}) of a curve are functions of several points and derivatives.
\item[Integral invariants] are formed from the moments or the partial moments $\int_{s_0}^s f(\phi(t))\, \d t$. With some care they can be made parameterization- and basepoint-independent \cite{feng2010classification,manay2006integral}. Initially, these invariants appear to have some advantages, being relatively robust and often including some locality. However, they are not always applicable; for example \cite{manay2006integral}, which used regional integral invariants, still requires a point correspondence optimization in order to get a distance between shapes. In addition, groups such as the projective group do not act on any f\/inite subset of the moments \cite{van1995vision}. Astrom \cite{aastrom1994fundamental} shows that there are no stable projective invariants for closed planar curves. While local integral invariants are promising, as reported by~\cite{hann2002projective}, there may be analytic dif\/f\/iculties in deriving them.
\end{description}

Another example of the moving frame method, which is common in image processing and shape analysis, is centre of mass reduction. The centre of mass of a shape may be moved to the origin in order to remove the translations. This may be calculated by, for example, $\int_{\phi(S^1)}(x^2/2,x y) \d y = \iint_{\mathrm{int}\phi(S^1)}(x,y) \d x \d y$. However, the shapes $x = a + \sin t$, $y=0$ have centre of mass equal to 0 for all values of $a$, even though they are related by translations. Thus, this method would not be robust on any dataset that contained shapes approaching such degenerate shapes. For rotations, the reference conf\/iguration method will not be robust if there are objects close to having a discrete rotational symmetry. The underlying problem with the reference conf\/iguration approach is that it is attempting to use a set of invariants equal to the dimension of the desired quotient space $M/G$. This space is almost always non-Euclidean, so such a set can only be found on some subset of $M/G$ of Euclidean topology. The set is then robust only on datasets that are bounded away from the boundary of this subset.

\looseness=-1 Calabi et al.~\cite{calabi1998differential} propose the use of dif\/ferential invariant signatures for shape analysis, and further argue that these should be approximated in a group-invariant way. For example, for the Euclidean group, the signature is the shape $(\kappa,\kappa_s)$ regarded as a subset of $\R^2$. The claimed advantages of the approach are that the signature determines the shape; that it does not depend on the choice of initial point on the curve or on parameterization by arclength, and that its $G$-invariance makes it robust; and that it is based on a general procedure for arbitrary objects and groups. See \cite{aghayan2014planar, ames2002three,
calabi1998differential,
hoff2013extensions,
olver2001moving,
shakiban2004signature,
shakiban2005classification} for further developments and applications of the invariant signature.

Although the signature at f\/irst sight appears to be complete (e.g., Theorem~5.2 in \cite{calabi1998differential}), a~more detailed treatment (e.g., \cite{hickman2012euclidean, olver2000moving}) highlights the fact that it is not complete on shapes that contain singular parts~-- straight and circular segments in the Euclidean case. For these parts, the signature reduces to a point. Thus, for shapes that have {\em nearly} straight or nearly circular segments, the signature cannot be robust. In addition, while the signature does not depend on the starting point or the parameterization, it takes values in a very complicated set, namely, the planar shapes. To compare the invariants of two shapes requires comparing two shapes. Essentially, the parameterization-dependence has only been deferred to a later stage of the analysis (unless one is content to compare shapes visually). One approach to this is to weight the signature, see~\cite{olver15groupoid} for more details.

\looseness=-1 The claim in \cite{calabi1998differential} that the method's $G$-invariance makes it robust should be assessed through further analysis and experiment. It would appear to be most relevant for datasets in which the errors due to the presentations of the shapes are comparable to those resulting from the errors in the shapes themselves (i.e., noise) and from the distribution of the objects in a classif\/ication problem.
Their f\/inal point, that the method is extremely general, is a powerful one. Calabi~\cite{calabi1998differential} carry out the procedure for the Euclidean and for the 2D af\/f\/ine group. However in Section~6.7 they report that ``the interpolation equations in general are transcendentally nonlinear and do not admit a readily explicit solution'', indicating that the method may not succeed for all group actions.

It is also possible to represent the shape as a binary image and apply image-based invariants (which are described next). This has the advantage of working directly in the space $\R^2$ on which the group acts, and avoiding all questions of parameterization, etc., but it does sacrif\/ice a lot of information about the shape. A related approach, which is popular in PDE-based method for curves, is to represent the shape as the level set of a smooth function $\phi\colon \R^2\to\R$. This is also parameterization-independent, and retains smoothness, but we have never seen it used in for constructing invariants.

\subsection{Invariants of images}\label{sec:iminv}

Most studies of image invariants has been based on grey-level images $f\colon\Omega\to[0,1]$, where $\Omega\subset\R^2$. The methods are primarily based on moments \cite{abu1984recognitive} or Fourier transforms \cite{ghorbel1994complete,kakarala2012bispectrum} of the images, and have been highly developed for the translation, Euclidean, and similarity groups, where the linearity of the action and the special structure of these Lie groups means that the approach is particularly fast and robust. Attempts have been made to extend the method of Fourier invariants to other groups. There is an harmonic analysis for many non-Abelian Lie groups, including, in fact, the M\"obius group \cite{taylor1990noncommutative}, as well as a general theory for compact non-commutative groups~\cite{Gauthier2008}. There are some applications of this theory to image processing \cite{kakarala2012bispectrum,turski2006computational} and to other problems in computational science. Fridman \cite{fridman2000numerical} discusses a Fourier transform for the hyperbolic group, the 3-dimensional subgroup of the M\"obius group that f\/ixes the unit disc. However, the theory appears not to have been developed to the point where it can be used as ef\/fectively as the standard Fourier invariants. Therefore, in this paper, in Section \ref{sec:images}, we develop a M\"obius invariant of images, based on a~dif\/ferential invariant signature of the image.

\section{M\"{o}bius invariants for curves}\label{sec:eg}

In this section we consider invariants of curves for the M\"{o}bius group, and derive one suitable for practical computation, demonstrating its application for a set of simple closed curves.

\subsection{Known M\"obius invariants}

The most well-known invariant of the M\"{o}bius group is the cross-ratio, also known as the {\em wurf}, which is based on the ratio of the distances between a set of 4 points:
\begin{gather*}
\CR(z_1,z_2,z_3,z_4) := \frac{(z_1-z_3)(z_2-z_4)}{(z_2-z_3)(z_1-z_4)},
\end{gather*}
where the invariance means that $\CR(Tz_1,Tz_2,Tz_3,Tz_4) = \CR(z_1,z_2,z_3,z_4)$ for M\"{o}bius transformation $T$.

Since the M\"obius group can send any triple of distinct points in $\C$ into any other such triple, there are no joint invariants of~2 or~3 points; the orbits are the conf\/igurations consisting of~1,~2, and~3 distinct points. When $n>4$, the set of cross-ratios of any four of the points forms a~complete invariant.

For large numbers of points, this set of all cross-ratios has cardinality $\mathcal{O}(n^4)$ which is impractically large (although some may be eliminated using functional relations amongst the invariants, known as syzygies \cite{olver01joint}). However, if the dataset of shapes or images is tagged with a small number of clearly-def\/ined landmarks, then some subset of the set of cross-ratios may form a useful invariant. This is the method by which Petukhov \cite{petukhov1989non} was able to identify linear-fractional, M\"obius, and projective relationships in biological shapes. For untagged objects, automatic tagging may be possible using critical points (e.g., maxima and minima) of images, and their values; these are homomorphism- and hence M\"obius-invariant, and can be identif\/ied, even in the pre\-sence of noise, by the method of persistent homology~\cite{edelsbrunner2008persistent}. However, such invariants are clearly highly incomplete for shapes and images, and we do not study them further here.

In order to derive dif\/ferential invariants for the M\"obius group, the most useful starting point is the Schwarzian derivative
\begin{gather}\label{eq:schwarzian}
(Sz)(t) = \left( \frac{z''}{z'} \right)' - \frac{1}{2} \left( \frac{z''}{z'} \right)^2 = \frac{z'''}{z'} - \frac{3}{2} \left( \frac{z''}{z'} \right)^2.
\end{gather}
By an abuse of notation, which is standard in the literature (see, for example, the very readable~\cite{ahlfors1988cross}), the same formula~\eqref{eq:schwarzian} is used in three dif\/ferent situations: when $z\colon \R\to \R$ (used in studying linear-fractional mappings in real projective geometry); when $z\colon \C\to \C$ (used in stu\-dying complex analytic mappings); and when $z\colon M\to\C$, $M$ a real 1-dimensional manifold (used in studying the M\"obius geometry of curves). We adopt the latter setting so that~$z'$ is the tangent to the curve. The Schwarzian derivative is then invariant under M\"obius transformations~$\phi$:
\begin{gather*}
S(\phi\circ z)(t) = S(\phi)(t)
\end{gather*}
and under reparameterizations $\psi\colon M\to M$ transforms as
\begin{gather*}
S (z \circ \psi) = (S (z) \circ \psi) \cdot (\psi')^2 + S (\psi),
\end{gather*}
where the last term is the {\em real} Schwarzian derivative. Therefore (where $\operatorname{Im}$ represents the imaginary part of a complex number)
\begin{gather*}
\operatorname{Im} S (z \circ \psi) = (\operatorname{Im} S (z) \circ \psi) \cdot (\psi')^2.
\end{gather*}
This can be used to construct a distinguished M\"obius-invariant parameterization of the curve. Let $\tilde z(\lambda) = z(t)$ where $\lambda=\psi(t)$. The parameter $\lambda$ will be chosen so that $\operatorname{Im} S(\tilde z) \equiv 1$. This gives
\begin{gather*}
\operatorname{Im} S(z) = \operatorname{Im} S(\tilde z\circ \psi) = (\operatorname{Im} S(\tilde z)\circ \psi) \cdot (\psi')^2 = (\psi')^2.
\end{gather*}
The choice $\psi'(t) = \sqrt{| \operatorname{Im} S(z)(t)|}$ achieves this while preserving the sense of the curve, while the choice $\psi'(t) = -\sqrt{| \operatorname{Im} S(z)(t)|}$ achieves it while reversing the sense. Put another way, the parameter
\begin{gather*}
\lambda = \int_{t_0}^t \sqrt{| \operatorname{Im} S(z)(t)|}
\end{gather*}
is invariant under M\"obius transformations and {\em almost} invariant under sense-preserving reparameterizations of $t$, which act as translations in $\lambda$ (because of the freedom to choose $t_0$). Equivalently, the 1-form
\begin{gather*}
\d \lambda = \sqrt{| \operatorname{Im} S(z)(t)|} \d t,
\end{gather*}
known as the M\"obius or inversive arclength, is M\"obius- and sense-preserving parameterization-invariant.

This is often stated in a form (originally due, according to Ahlfors \cite{ahlfors1988cross}, to Georg Pick) using the Euclidean curvature $\kappa$. If the curve is parameterized by Euclidean arclength $s$, then its tangent $\theta(s) = z'(s)/\|z'(s)\|$ and $\kappa(s) = \theta'(s)$. Dif\/ferentiating again leads to $\operatorname{Im} S(z)(s) = \kappa'(s)$, or
\begin{gather*}
\d\lambda = \sqrt{| \kappa'(s)| } {\rm d}s.
\end{gather*}

The 1-form $\d\lambda$ provides a useful discrete invariant, the M\"obius length $L$ of the curve
\begin{gather*}
L = \int_M \d\lambda.
\end{gather*}

The real part of $S(\tilde z)(\lambda)$ is now a parameterization-invariant M\"{o}bius invariant known as the inversive or M\"{o}bius curvature~\cite{Patterson1928}
\begin{gather*}
\kappa_{\Mob} = \frac{4\kappa' (\kappa''' -\kappa^2 \kappa') - 5 (\kappa'')^2 }{8(\kappa')^3} ,
\end{gather*}
where $'$ denotes dif\/ferentiation with respect to arclength $s$.

The set of all dif\/ferential M\"{o}bius shape invariants is then $\d^n\kappa_{\Mob}/ d\lambda^n$, $n\ge 0$. Following Calabi et al.~\cite{calabi1998differential}, two possible candidate invariants that could be used to recognize M\"obius shapes are the function $\kappa_\Mob(\lambda)$ modulo translations and the signature $(\kappa_\Mob,\d\kappa_\Mob/\d\lambda)(S^1)\subset\R^2$. These are complete on sections of shapes with no vertices (points where $\kappa'(s)=0$). However, since $\kappa_\Mob$ requires the 5th derivative of the curve (third derivative of the curvature), it is not robust in the presence of noise, and we do not explore it further.

There are also invariants based on forms of higher degree. We give just one example, the {\em M\"obius energy}
\begin{gather}\label{eq:energy}
-\frac{1}{2}\iint_{M \times M} \frac{\sin\theta_u \sin\theta_v}{|v-u|^2}\d u\, \d v,
\end{gather}
introduced by O'Hara and Solanes \cite{o2010m}, see also the related Kerzman--Stein distance \cite{barrett2007cauchy}. Here $\d u$, $\d v$ are Euclidean arclength and $\theta_u$ (resp. $\theta_v$) is the angle between $v-u$ and $z'(u)$ (resp.~$v$). It represents a renormalization of the energy of particles on the curve, distributed evenly with respect to Euclidean arclength and interacting under an $r^{-4}$ potential. (The authors of \cite{o2010m} comment that ``due to divergence problems, almost nothing is known about integral geometry [i.e., about invariant dif\/ferential forms] under the M\"obius group''.) The M\"{o}bius energy has one distinguishing feature compared to the other invariants: its def\/inition depends only on the f\/irst derivative of the curve. The singularity at $u=v$ is removable, since the integrand obeys
\begin{gather*}
\frac{\sin\theta_u \sin\theta_v}{|v-u|^2} = \frac{1}{4}\kappa(u)\kappa(v)+\mathcal{O}(|v-u|).
\end{gather*}
Thus, the energy is def\/ined for $C^2$ curves, and even near the singularity it depends only on the curvature of the curve.

However, in numerical experiments we found the integrand of~\eqref{eq:energy} dif\/f\/icult to evaluate accurately and invariantly, particularly near the diagonal, while the double integral generated little improvement in robustness. The most negative feature of the energy is its cost: its evaluation apparently requires considering all pairs of points on a discrete curve. There is another reason why invariant 2-forms are less useful than invariant 1-forms like $\d\lambda$: invariant $k$-forms distinguish coordinates on $M^k$ only up to $k$-form- (i.e., volume-) preserving maps. These are inf\/inite-dimensional for $k>1$, but 1-dimensional for $k=1$: the coordinates are determined up to translations. For these reasons, we do not consider the energy, or related invariant 2-forms, any further.

Finally, M\"obius transformations map circles to circles, and thus critical points of Euclidean curvature (the `vertices' of the shape) are M\"obius invariants. The four vertex theorem states that all smooth curves have at least four vertices. The number of vertices is a discrete M\"obius invariant. Consequently, sections of shapes on which the Euclidean curvature is monotonic, and their M\"obius lengths, are M\"obius invariant.

\subsection{Example: Evaluating the M\"obius length of an ellipse}\label{sec:ellipse}

Having described a number of possible invariants, and ruled many of them out based on the criteria discussed in Section~\ref{sec:req}, we now provide a concrete example, illustrating and testing some of these constructions on the ellipse $z(t) = \cos 2 \pi t + 2 {\rm i} \sin 2 \pi t$, $0\le t\le 1$. The ellipse is discretized at $t_i = (i+1/4) h$, $i=0,\dots,n$, giving points $z_i = z(t_i)$, and the M\"obius arclength $\d\lambda$ is calculated in two ways:
\begin{itemize}\itemsep=0pt
\item From the {\em curvature} method, in which the Euclidean curvature $\kappa$ is calculated in a Eucli\-dean-invariant way\footnote{Following Calabi et al.~\cite{calabi1998differential}, let~$A$,~$B$,~$C$ be three points on a curve, and let $a=d(A,B)$, $b=d(B,C)$, and $c=d(C,A)$ be the Euclidean distances between them. Then the curvature of the circle interpolating~$A$,~$B$, and~$C$ is
\begin{gather*}
\kappa(A,B,C) = \pm 4\frac{\sqrt{s(s-a)(s-b)(s-c)}}{abc}.
\end{gather*}
}
by interpolating a circle through 3 adjacent points, and $\d\kappa/\d s$ by a~f\/inite dif\/ference, giving
\begin{gather}\label{eq:method1}
\d\lambda(t_{i+3/2}) \approx | \kappa(z_{i+1},z_{i+2},z_{i+3}) - \kappa(z_{i},z_{i+1},z_{i+2}) | (| z_{i+2}-z_{i+1}| ).
\end{gather}
\item From the {\em cross-ratio} method, using
\begin{gather}\label{eq:method2}
 \d\lambda(t_{i+3/2}) \approx \sqrt{ 6 | \mathrm{Im}(\log(\CR(z_i,z_{i+1},z_{i+2},z_{i+3})))| }.
 \end{gather}
\end{itemize}

\begin{figure}[t]\centering
\includegraphics[height=6.5cm]{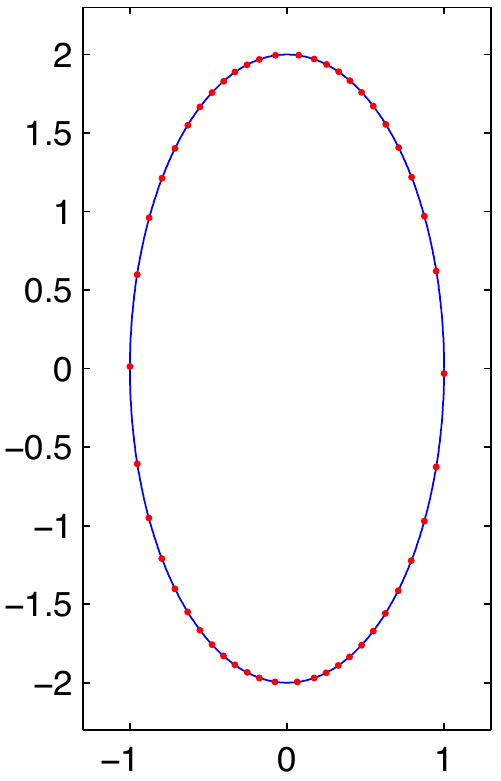}
\caption{The ellipse used as a test case, showing 50 points equally spaced with respect to M\"obius arclength.}\label{fig:ellipse}
\end{figure}

\begin{figure}[t!]\centering
\includegraphics[width=9.2cm]{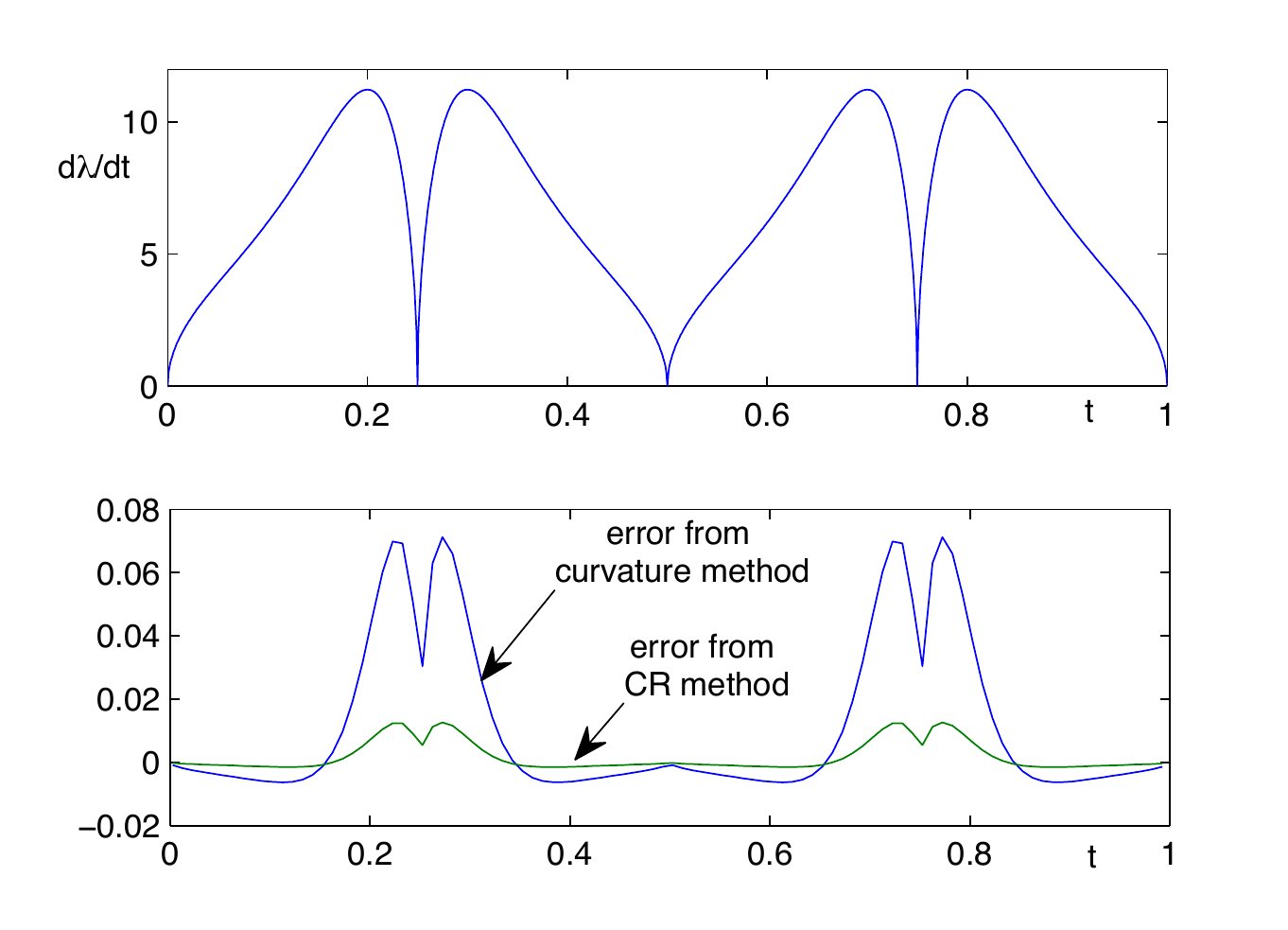}
\caption{The M\"obius arclength density, $\d\lambda/\d t$, for the ellipse, and its errors when calculated by the two approximations (\ref{eq:method1}) and (\ref{eq:method2}).}\label{fig:ell2}
\end{figure}

Away from vertices (points where $\d\lambda = 0$), both of these are second-order f\/inite dif\/ference approximations to the M\"obius length $\int_{t_{i+1}}^{t_{i+2}}\d\lambda$ of the arc $z([t_{i+1},t_{i+2}])$, or, dividing by $h$, to its arclength density $\d\lambda/\d t$. This can be established by expanding the approximations in Taylor series\footnote{A related approximation is $\d\lambda(t_{i+3/2}) \approx \sqrt{\frac{9}{2} | \mathrm{Im}(\CR(z_i,z_{i+1},z_{i+2},z_{i+3}))| }$. This is given in~\cite{barrett2007cauchy} but with an apparent error ($\frac{9}{2}$ replaced by~6).}. We test their accuracy as a function of the step size~$h$. The total M\"obius length of the curve is
\begin{gather*}
L := \int_0^1 d\lambda = \int_0^1 \frac{12\pi \sqrt{| \sin 4\pi t|} }{5+3 \cos 4\pi t}\, \d t = 6.86\quad (\mbox{to } 2 \mbox{ d.p.}).
\end{gather*}
The arclength density $\d\lambda/\d t$ is shown in Fig.~\ref{fig:ell2}, showing its 4 zeros at the vertices of the ellipse, where the ellipse is approximately circular, and its square-root singularities at the vertices. The M\"obius length is approximated by the trapezoidal rule, i.e., by $L_h := \sum\limits_{i=1}^n \d\lambda(t_{i+1/2})$. The error $L-L_h$ is expected to have dominant contributions of order $h^{3/2}$, due to the singularities at the vertices, and of order $h^2$, due to the f\/inite dif\/ferences used to approximate~$\d\lambda$.

In this example, the cross-ratio approximation has errors approximately 0.176 times those of the curvature approximation (see Fig.~\ref{fig:ell2}). However, due to some cancellations in this particular example, the curvature approximation actually gives a slightly more accurate approximation to the M\"obius length (see Fig.~\ref{fig:ellipseerr}). The dominant sources of error can be eliminated by two steps of Richardson extrapolation, f\/irst to remove the $\mathcal{O}(h^{3/2})$ error, and then to remove the $\mathcal{O}(h^2)$ error. This is highly successful and allows the calculation of the M\"obius length with an error of less than $10^{-10}$, even though it is singular and involves a 3rd derivative.

Next, we subject the ellipse to a variety of M\"obius transformations. The resulting shapes and errors are shown in Fig.~\ref{fig:mobiusdemo}. The errors increase markedly for the curvature method, which is not M\"obius invariant, but are unchanged for the cross-ratio method, which is M\"obius invariant. Thus, this experiment supports the argument of~\cite{calabi1998differential} that numerical approximations of invariants should themselves be invariant.

\begin{figure}[t]\centering
\includegraphics[scale=0.86]{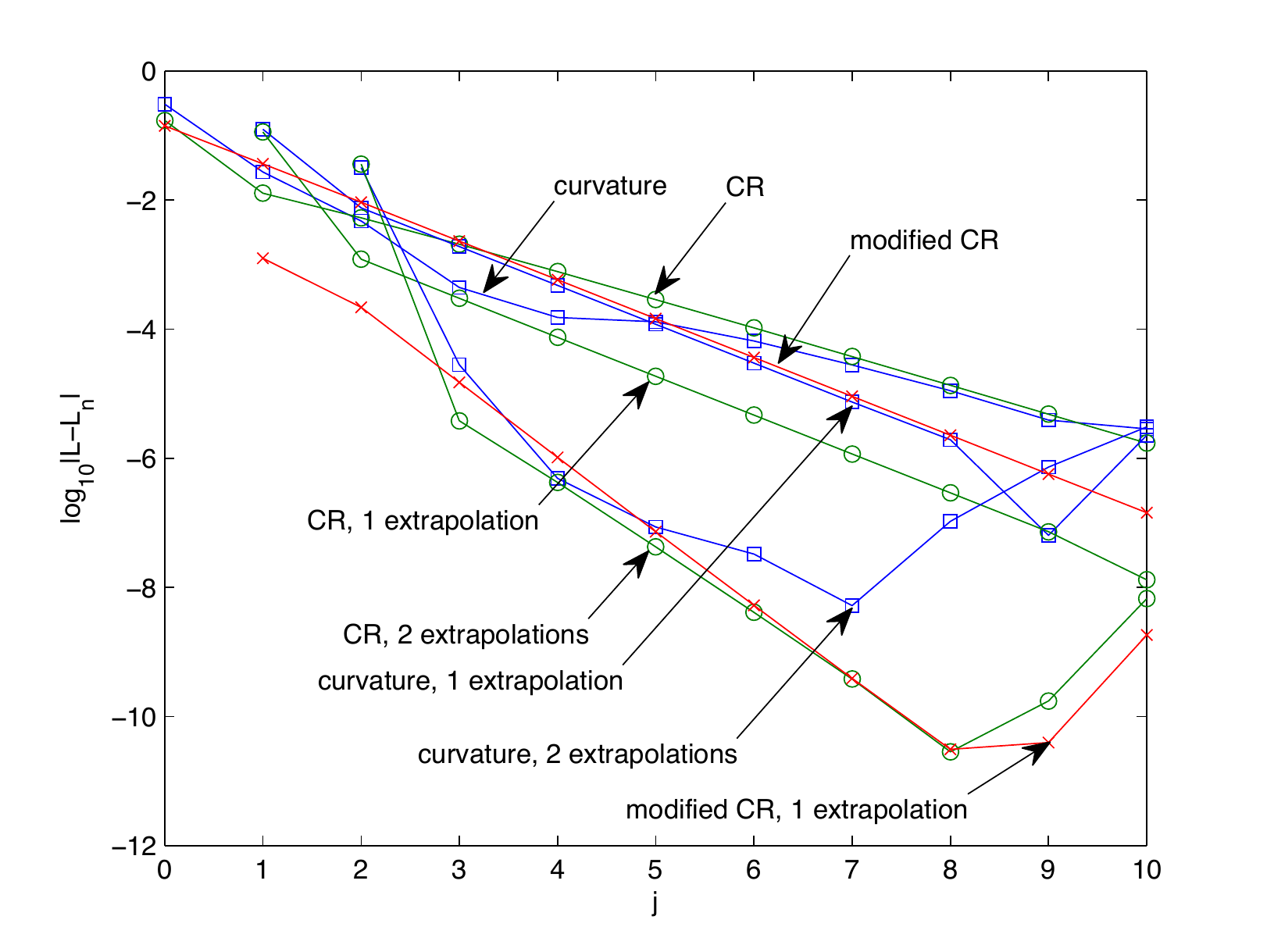}
\caption{The error in the two approximations of the total M\"obius length of the ellipse. Here there are $n := 25\cdot 2^j$ points on the ellipse.}\label{fig:ellipseerr}
\end{figure}

\begin{figure}[t!]\centering
\includegraphics[scale=1]{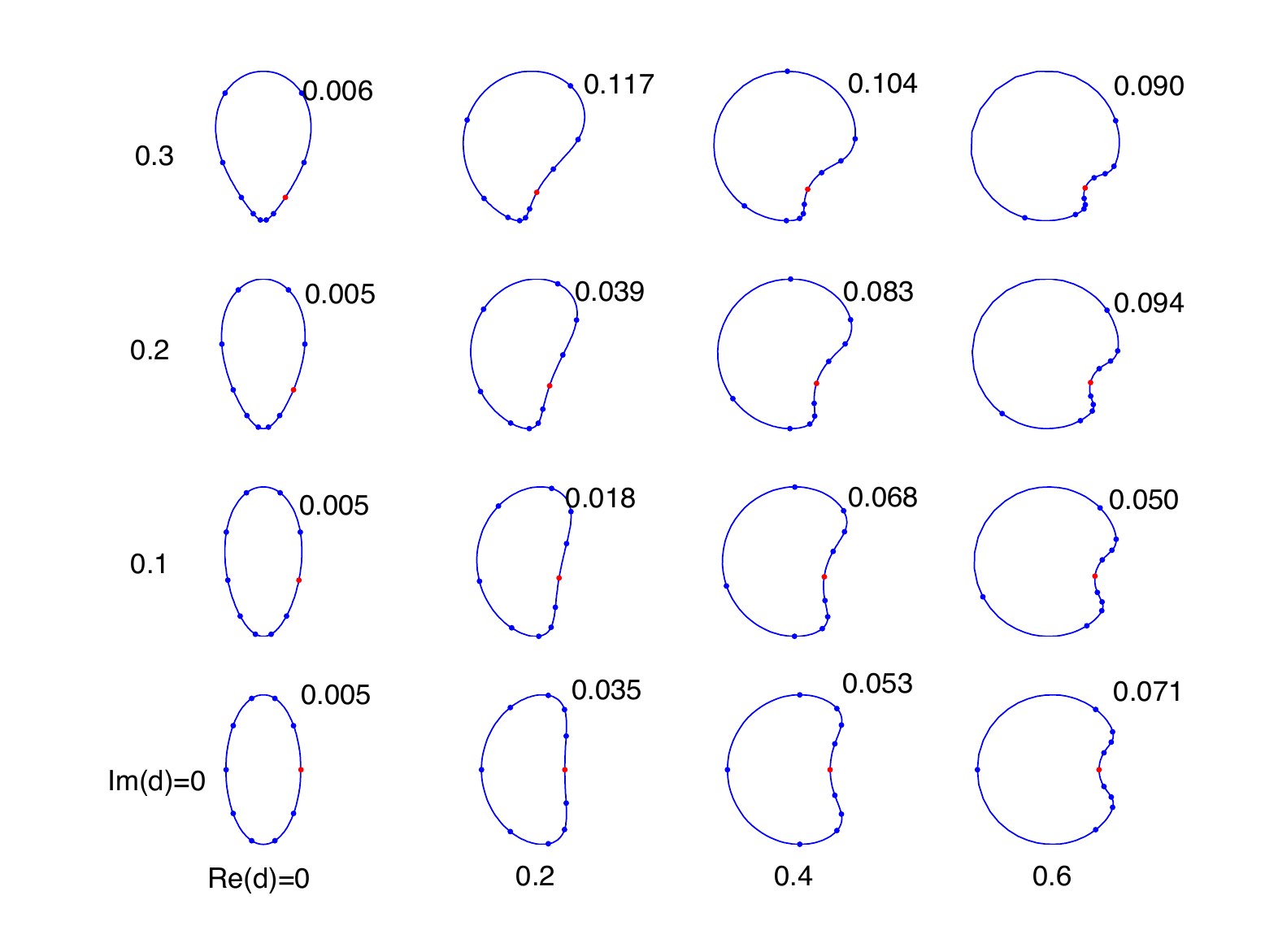}
\caption{The ellipse subjected to 16 dif\/ferent M\"obius transformations $z \mapsto \frac{z}{1 + d z}$. 10 landmarks, equally spaced in $\lambda$, are shown. Next to each shape is given the error in its M\"obius length as calculated by the curvature method. The error in the cross-ratio method is $0.005$ for all shapes, because it is M\"obius invariant. (The other M\"obius transformations are the Euclidean similarities, which are easy to visualize. Figures not to scale.)}\label{fig:mobiusdemo}
\end{figure}

\subsection[The M\"obius invariant ${\rm CR}(\lambda;z,\delta)$]{The M\"obius invariant $\boldsymbol{\CR(\lambda;z,\delta)}$}\label{sec:cr}
The method proposed in this paper for closed curves is to parameterize the curve by M\"obius arclength, giving $z(\lambda)$, and to use as an invariant the cross-ratio of all sets of 4 points a distance~$\delta$ apart. We call this the Shape Cross-Ratio, or SCR.

\begin{Definition} Let $z\colon S^1\to\mathbb{C}$ be a smooth curve. Let $L$ be its M\"obius length, and let $\tilde z \colon \R\to \mathbb{C}$ be an $L$-periodic function representing the curve parameterized by M\"obius arclength. The shape cross-ratio of $z$ is the $L$-periodic function $\SCR\colon \R\to\C$ def\/ined by{\samepage
\begin{gather*}
\SCR(\lambda;z,\delta) = \CR(\tilde z(\lambda), \tilde z(\lambda+\delta), \tilde z(\lambda+2\delta), \tilde z(\lambda + 3\delta)).
\end{gather*}
The shape cross-ratio {\em signature} of $z$ is the shape $\SCR(\R;z,\delta)\subset\C$.}
\end{Definition}

The shape cross-ratio is invariant under the M\"obius group, and sense-preserving reparameterizations of the curve act as translations in $\lambda$. (Reversals of the curve will be considered in Section~\ref{sec:reversals}.) The shape cross-ratio signature is invariant under the M\"obius group and under reparameterizations of the curve.

For an $L$-periodic function $f\colon \R\to\mathbb{C}$ we will denote its Fourier coef\/f\/icients by
\begin{gather*}
\F(f)_n = \frac{1}{L} \int_0^L f(t) e^{-2 \pi {\rm i} n t / L} \d t.
\end{gather*}
The translation $t\mapsto t+c$ acts on the Fourier coef\/f\/icients as $\F(f)_n \mapsto e^{-2\pi {\rm i} c n/L} \F(f)_n$. The Fourier amplitudes $|\F(f)_n|^2$ are invariant under translations, and can be used to recognize functions up to translations, but are clearly not a complete invariant: for a function discretized at $N$ equally-spaced points, and using the DFT, the space of orbits has dimension $2N-1$ and we have only $N$ invariants. The bispectrum~\cite{kakarala2012bispectrum} $\F(f)_m\F(f)_n\F(f)_{-m-n}$ is better, being complete on functions all of whose Fourier coef\/f\/icients are non-zero, but it is a very large set of invariants. Other invariants are $\F(\phi_1\circ f)_n \F(\phi_2\circ f)_{-n}$ for any functions $\phi_{1,2}$. Each such choice provides $2N$ invariants. The choice of $\phi_1$ and $\phi_2$ determines which aspects of $f$ are measured by the invariant. If necessary, several such pairs may be used.
\begin{Definition}
The Fourier cross-ratio of the shape $z$ is
\begin{gather}\label{eq:fcr1}
\FCR(\cdot;z,\delta)\colon \ {\mathbb Z}\to\mathbb{C},\qquad\!\! \FCR(n;z,\delta) = \F(\phi_1\circ\SCR(\cdot;z,\delta))_n \F(\phi_2\circ\SCR(\cdot;z,\delta))_{-n},\!\!
\end{gather}
where the Fourier transforms are based on the M\"obius length $L$ of $z$.
\end{Definition}

In the numerical illustrations we use
\begin{gather}\label{eq:fcr2}
 \phi_1(w) = \frac{w}{\sqrt{1+|w|^2}},\qquad \phi_2(w) = \phi_1(w)^2
 \end{gather}
 and the distance between the invariants of two shapes $z_1$ and $z_2$ given by
 \begin{gather} \label{eq:fcr3}
 \| \FCR(\cdot;z_1,\delta)-\FCR(\cdot;z_2,\delta) \|_2.
 \end{gather}
The motivation here is that the cross-ratio becomes arbitrarily large when two dif\/ferent parts of the curve approach one another. If left untouched (i.e., if we use just $\phi_1(w)=w$), then these large spikes in $\SCR(\lambda;z,\delta)$ will dominate all other contributions to the shape measurement. By scaling them using~\eqref{eq:fcr2}, they will still contribute to the description of the shape, but in a way that is balanced with respect to other parts of the shape. $\phi_1$ and $\phi_2$ take values in the unit disk and $\phi_2$ is sensitive to the main range of features of shapes; only values of $\SCR$ near~0 are suppressed, and these are rare.

The invariant $\SCR(\lambda;z,\delta)$ is smooth on simple closed curves, and also on most curves with self-intersections (blow-up requires the close approach of two points M\"obius distance $n\delta$ apart). It is locally complete, as given $z([a,a+3\delta])$, the invariant $\SCR(\lambda)$ determines $z$. We do not know if it is globally complete, i.e., if, given $\SCR(\lambda)$ which is the invariant of some shape, the shape can be determined up to M\"obius transformations, because this requires solving a nonlinear functional boundary value problem. Subject to this restriction, the invariant $\FCR(n;z,\delta)$ is complete except on a residual set of shapes (those for which enough Fourier coef\/f\/icients in~\eqref{eq:fcr1} are zero).

We will f\/irst study the numerical approximation of $\SCR$ and then study its use in recognizing shapes modulo M\"obius transformations.

The numerical experiments in Section~\ref{sec:ellipse} convinced us to approximate the M\"obius arclength using the cross-ratio. However, when combined with piecewise linear interpolation to locate points on the curve the required distance $\delta$ apart, we found that the resulting values of $\SCR(\lambda;z,\delta)$ did not converge as $h\to 0$. This is due to accumulation of errors along the curve, which arise particularly at the vertices due to the singularities there. This prompted us to develop a more ref\/ined interpolation method that takes into account the singularities of $\d\lambda/\d t$ at the vertices. We call it the {\em modified cross-ratio method}:

\begin{enumerate}\itemsep=0pt
\item Calculate the square of the M\"obius arclength density at the centre of each cell, as
\begin{gather*} \left(\frac{\d\lambda}{\d t}\right)^2_{i+3/2} = 6 \operatorname{Im}(\log(\CR(z_i,z_{i+1},z_{i+2},z_{i+3}))). \end{gather*}
If the curve is smooth, this is a smooth function.
\item Let $f(t)$, $0\le t<1$, be the piecewise linear interpolant of $\left(\frac{\d\lambda}{\d t}\right)^2_{i+3/2}$.
\item Calculate $\lambda(t)=\int_0^t \sqrt{|f(\tau)|}\, \d\tau$ and its inverse, $t(\lambda)$, used in locating the parameter values at which points a~desired length apart are located, {\em exactly} (we omit the formulas).
\item The desired points $z(t(\lambda+n \delta))$ are calculated using linear interpolation from the known values $z(i h)$.
\item The cross-ratio is evaluated at $N$ points equally spaced in $\lambda$, giving $\SCR(i L /N;z,\delta)$ for $i=1,\dots,N$, and the Fourier invariant $\FCR$ evaluated using two FFTs.
\end{enumerate}
The resulting cross-ratio is globally second-order accurate in $h$. Its accuracy could be increased for smooth curves using higher order interpolation, but the calculation of the inverse $t(\lambda)$ would be much more complicated and the method would be less robust.

The error in the length $L$ of the ellipse used in Section \ref{sec:ellipse} as calculated by this method is shown in Fig.~\ref{fig:ellipseerr}. It behaves extremely reliably over a wide range of scales of $h$; its error after one Richardson extrapolation is observed to be $\mathcal{O}(h^4)$, which indicates that the singularities at the vertices have been completely removed.

The parameter $\delta$ is the length scale on which $\SCR(\lambda;z,\delta)$ describes the shape. However, if $\delta\to 0$ then $\SCR(\lambda;z,\delta)\to \kappa_\Mob\pm \mathrm{i}$: the real part becomes extremely nonrobust and the imaginary part yields no information~\cite{Patterson1928}. In the experiments in this paper we have used $\delta = L/8$. Choosing $L/\delta\in\mathbb{Z}$ seems to yield somewhat improved accuracy, as the same values of~$z$ are used repeatedly.

\begin{figure}[t!]\centering
\includegraphics[width=125mm]{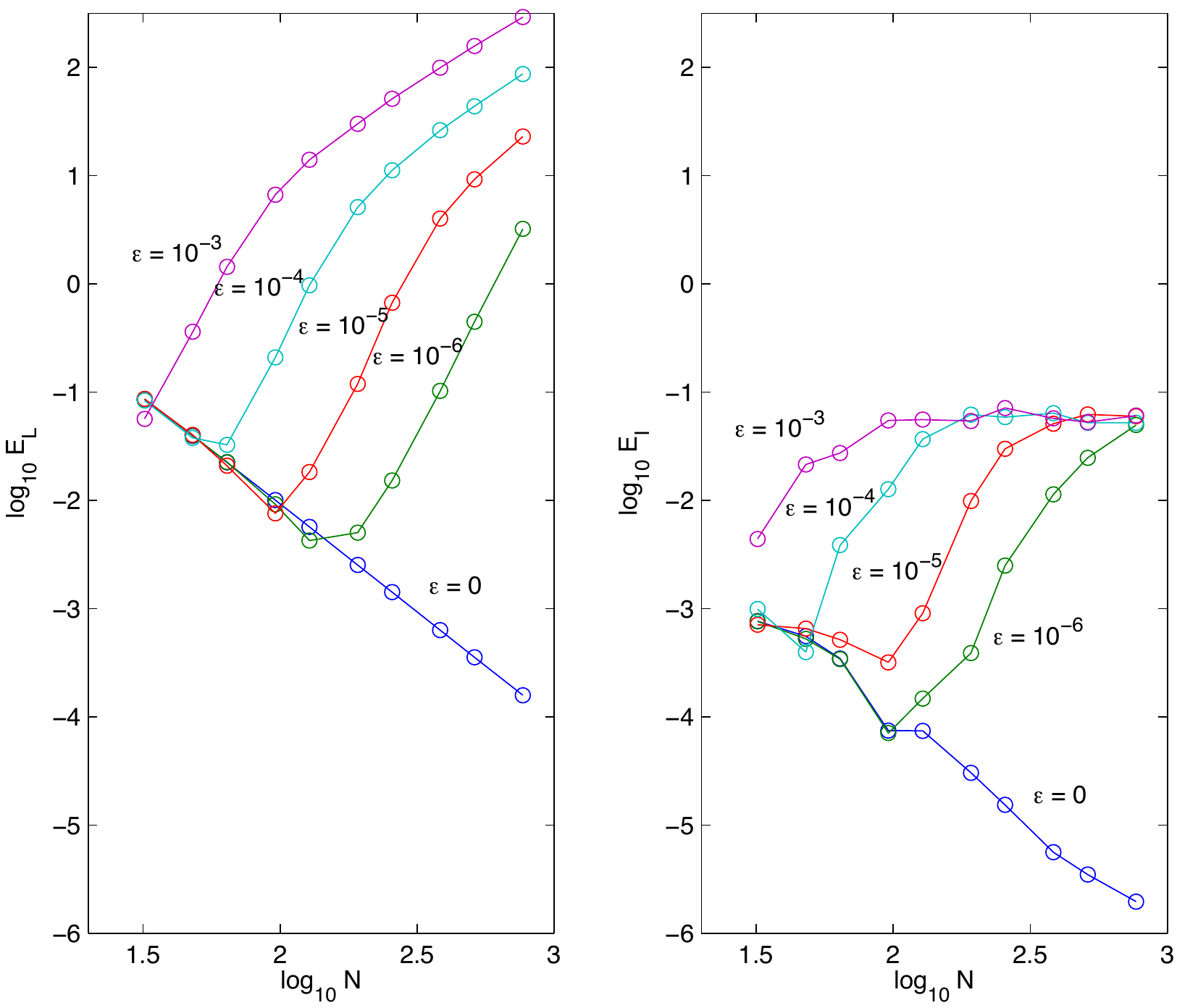}
\caption{Average error in $L$ (left) and $\FCR$ (right) for the ellipse shown in Fig.~\ref{fig:ellipse} as a function of the number of points $N$ and noise level $\varepsilon$.}\label{fig:ellipseerr2}
\end{figure}

As the method relies on parameterization by M\"obius arclength, it is unavoidably sensitive to noise in the data. We test its sensitivity for a noise model in which each point on the discrete curve is subject to normally distributed noise of standard deviation $\varepsilon$. The dependence of the error on $\epsilon$ and $N$ is shown for length and cross-ratio invariants in Fig.~\ref{fig:ellipseerr2}. Clearly, both are sensitive to relatively small amounts of noise. However, some positive features can also be seen:
\begin{itemize}\itemsep=0pt
\item[(i)] The errors in $L$ and $\FCR(n;z,\delta)$ are both $\mathcal{O}(h^2)$ in the absence of noise.
\item[(ii)] The errors in $\FCR(n;z,\delta)$ are much smaller than those in $L$~-- their relative errors are about 8 times smaller. (For the ellipse, $L\approx6.86$ and $\|\SCR(\cdot;z,\delta)\|_2\approx 0.65$.)
\item[(iii)] The error in $\FCR(n;z,\delta)$ appears to saturate at about 13\% as $\varepsilon$ increases and as $N$ increases.
\end{itemize}
Point (iii) is particularly striking. It appears to hold because (a) $\delta$ is chosen to be proportional to $L$, and thus when the noise is high, the chosen points stay roughly in their correct places; and (b) noise in the chosen points is averaged out by the Fourier transform, which remains dominated by its f\/irst few terms. This ef\/fect is illustrated in Fig.~\ref{fig:ellipseerr3} in which a single noise realization is illustrated for value of $\varepsilon$ from 0 to $10^{-2}$. Even though $L$ is overestimated by a~factor of~100, the signature cross-ratio $\SCR(\lambda;z,\delta)$ is still recognizable.

\begin{figure}[t!]\centering
\includegraphics[width=125mm]{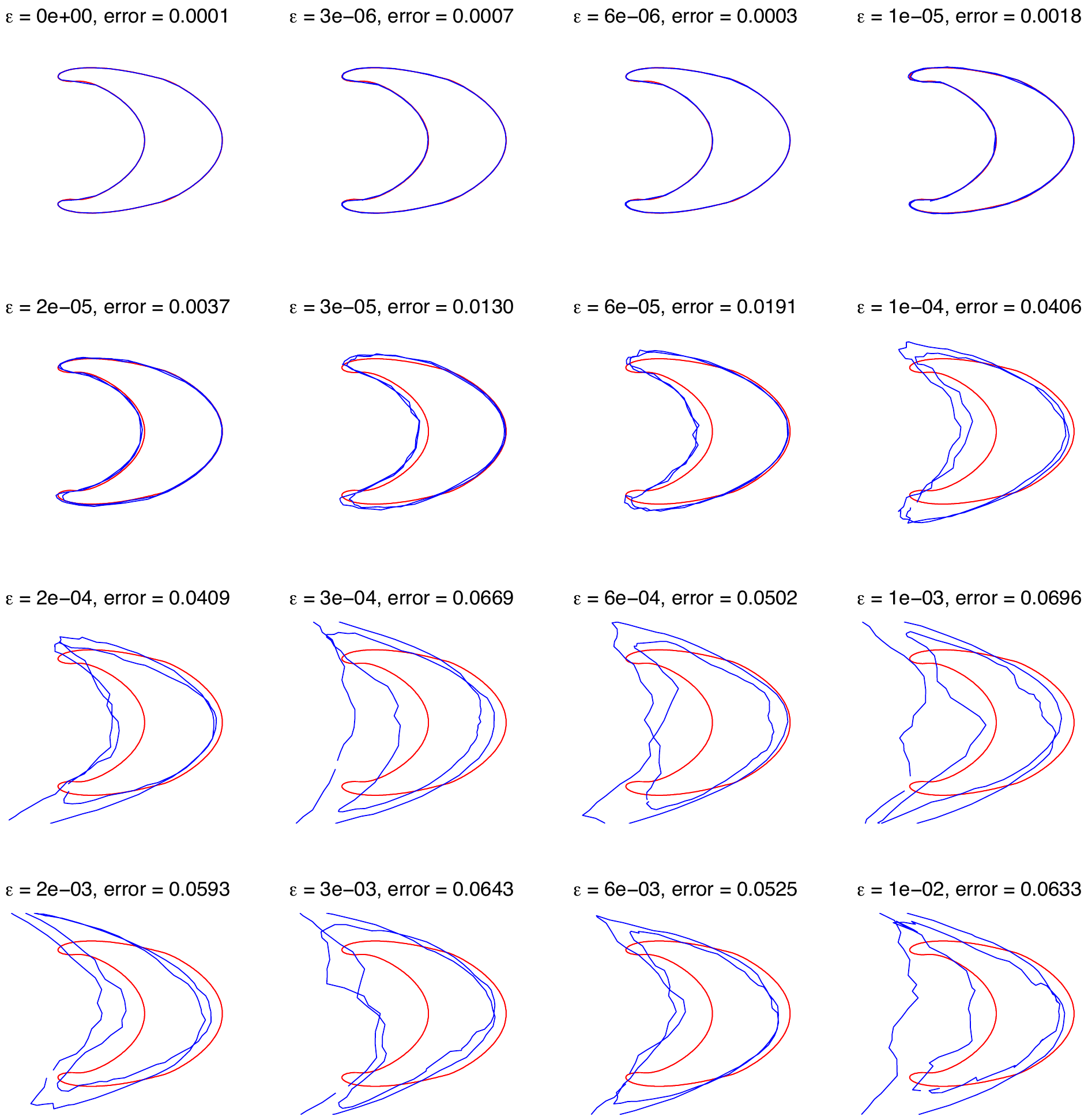}
\caption{The cross-ratio signature $\SCR\colon S^1\to\mathbb{C}$ (blue), and the error in its Fourier invariant $\FCR$, is shown for the ellipse discretized with $N=128$ points and various levels of noise. The exact signature is shown in red.}\label{fig:ellipseerr3}
\end{figure}

\begin{figure}[t!]\centering
\includegraphics[width=100mm]{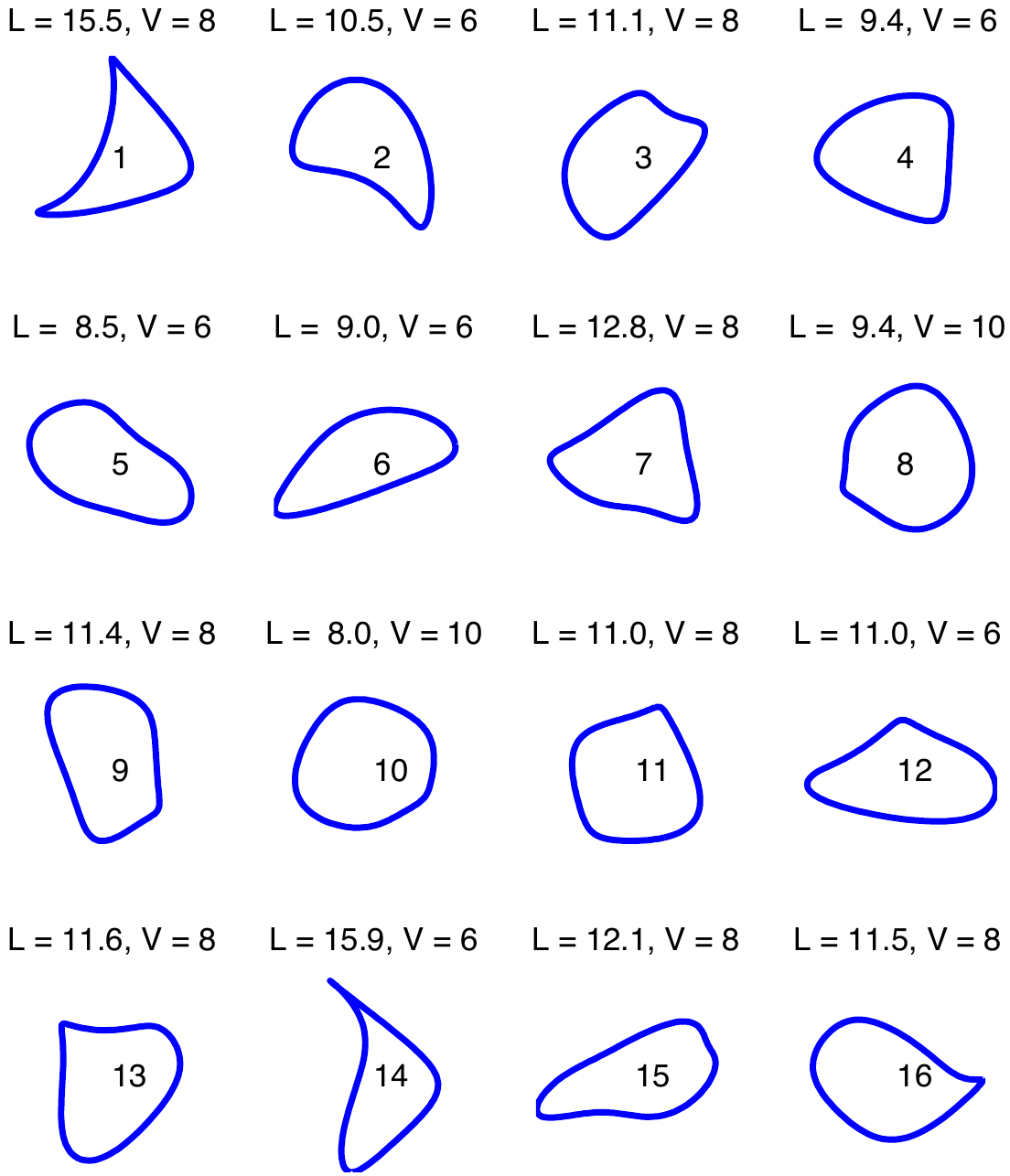}
\caption{The set of 16 shapes used in the numerical experiments. The shapes are shown to scale with the shape number marking the origin, along with the shape's M\"obius length $L$ and number of vertices $V$.}\label{fig:shapes}
\end{figure}

\begin{figure}[t!]\centering
\includegraphics[width=126mm]{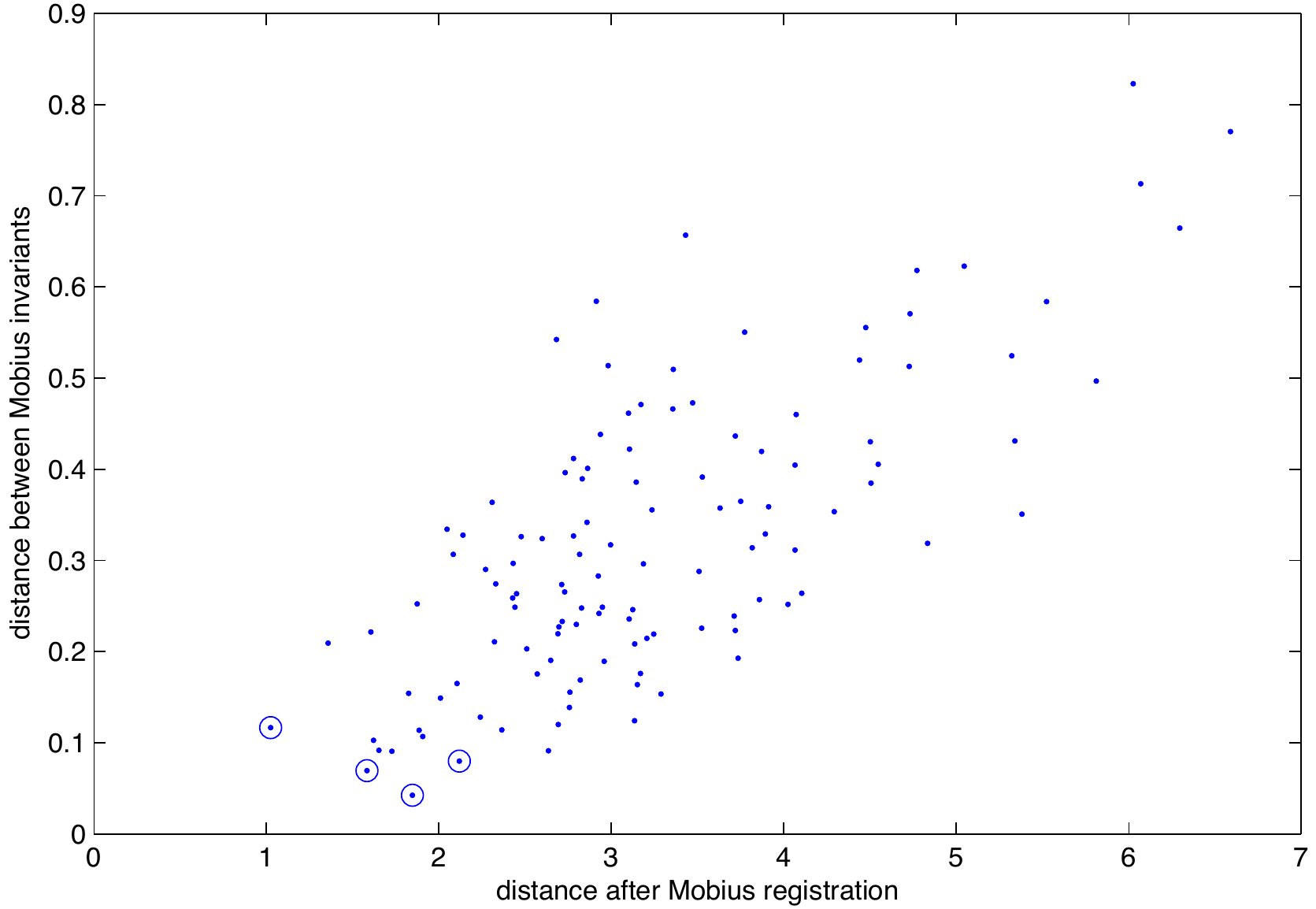}
\caption{Scatter plot of distance between all 120 pairs of 16 shapes with respect to (i) the $H^1$ distance between the shapes, as def\/ined in \eqref{eq:d1}--\eqref{eq:d3}, and (ii) the distance between their M\"obius invariants, def\/ined in \eqref{eq:fcr1}--\eqref{eq:fcr3}. The correlation coef\/f\/icient is 0.85. The registration between the 4 pairs marked with circles is illustrated in Figs.~\ref{fig:ref1213}--\ref{fig:ref0911}.}\label{fig:distancecorr}
\end{figure}

\subsection{Comparison with shape registration}\label{sec:comparison}
We now compare the results of the M\"obius invariant $\FCR(n;z,\delta)$ with direct registration of shapes. Given two shapes $z$ and $w$ we def\/ine the $G$-registration of $z$ onto $w$ as
\begin{gather}\label{eq:d1}
 r_G(z,w) = \min_{\varphi\in G\atop \psi\in\operatorname{Dif\/f}^+(S^1)} \| \varphi\circ z\circ\psi - w \| .
 \end{gather}
Dif\/ferent choices of norm in \eqref{eq:d1} will give dif\/ferent registrations; we have used the $H^1$ norm
\begin{gather}\label{eq:d2}
\|z\|_{H^1}^2 = \int_0^1 |z(t))|^2 + \alpha |z'(t))|^2\, \d t,
\end{gather}
where the constant $\alpha$ was chosen as $0.1$, a value which made both contributions to the norm roughly equal. One of the peculiarities of the M\"obius group is that~$z$ may register very well onto~$w$ while $w$ registers poorly onto~$z$. This happens when~$z$ has a distinguished feature which can be squashed, thus minimizing its contribution to~$r_G(z,w)$. Therefore in our experiments we use the `distance'
\begin{gather} \label{eq:d3}
d_G(z,w) = \max(r_G(z,w),r_G(w,z)).
\end{gather}

\begin{figure}[t!]\centering
\includegraphics[width=7.8cm]{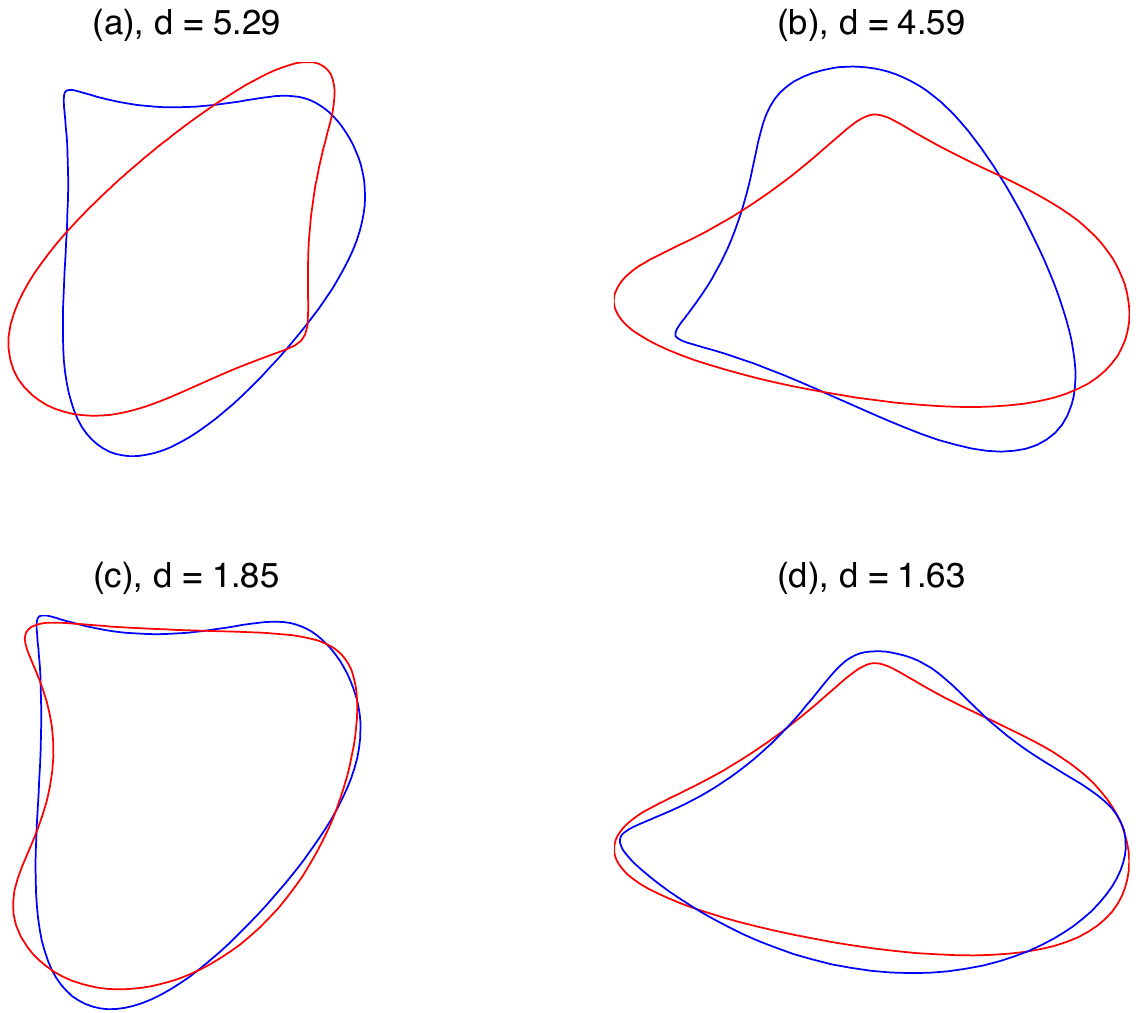}
\caption{Registration of shapes 12 (blue) \& 13 (red). This pair has the closest M\"obius invariants (distance 0.0426, see~\eqref{eq:fcr3}), is 9th closest after M\"obius registration, and 96th closest after similarity registration. (a)~Similarities act on blue shape; (b)~Similarities act on red shape; (c)~M\"obius acts on blue shape; (d)~M\"obius acts on red shape. Here $d=r_G(x,y)$ where $x$ and $y$ are the two shapes, see equation~\eqref{eq:d1}.}\label{fig:ref1213}\vspace{-2.5mm}
\end{figure}

\begin{figure}[t!]\centering
\includegraphics[width=7.9cm]{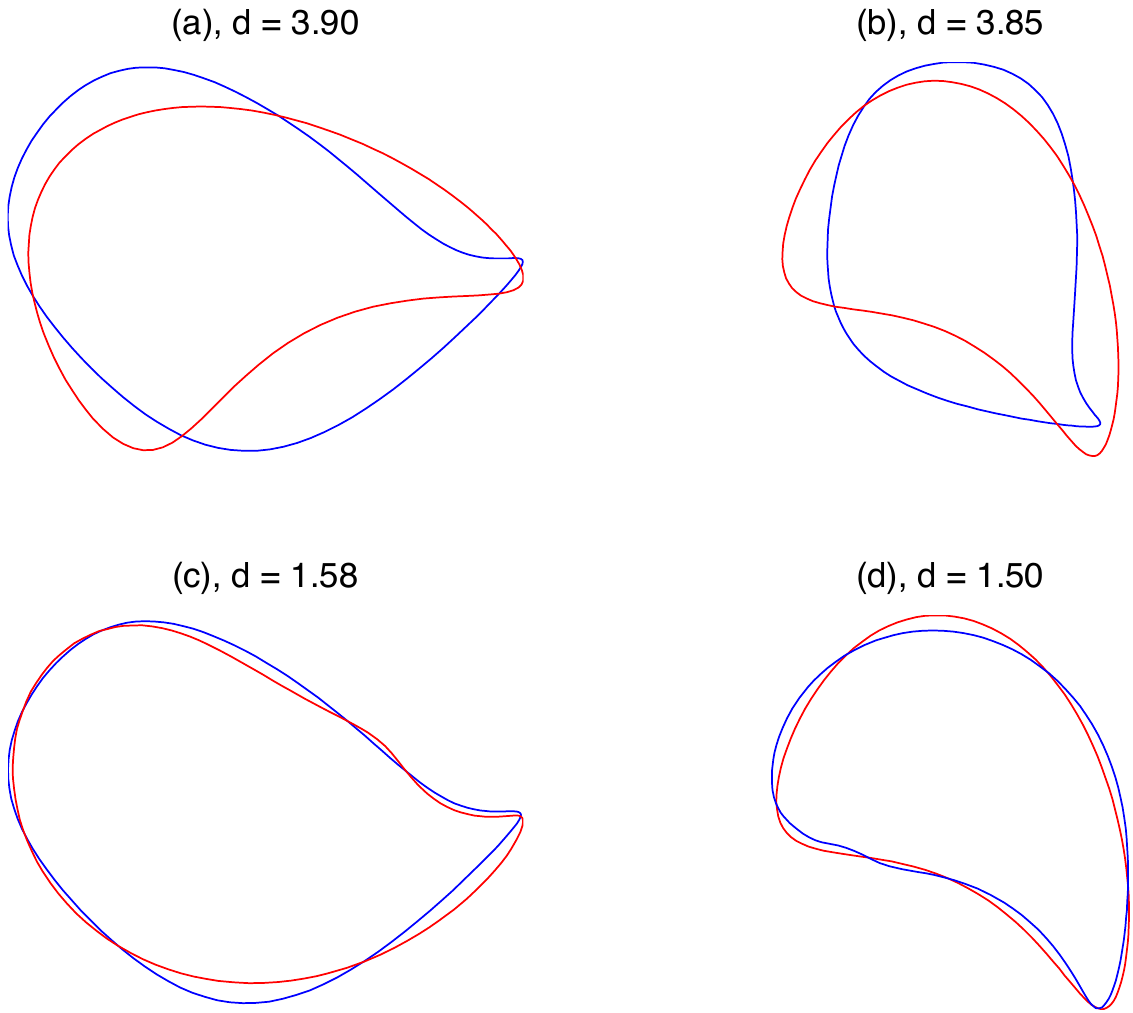}
\caption{Registration of shapes 2 (blue) \& 16 (red). This pair has the 2nd closest M\"obius invariants (distance 0.0697), see \eqref{eq:fcr3}), is 3rd closest after M\"obius registration, and 42rd closest after similarity registration. (a)~Similarities act on blue shape; (b)~Similarities act on red shape; (c)~M\"obius acts on blue shape; (d)~M\"obius acts on red shape. Here $d=r_G(x,y)$ where $x$ and $y$ are the two shapes, see equation~\eqref{eq:d1}.}\label{fig:ref0216}
\end{figure}

\begin{figure}[t]\centering
\includegraphics[width=7.9cm]{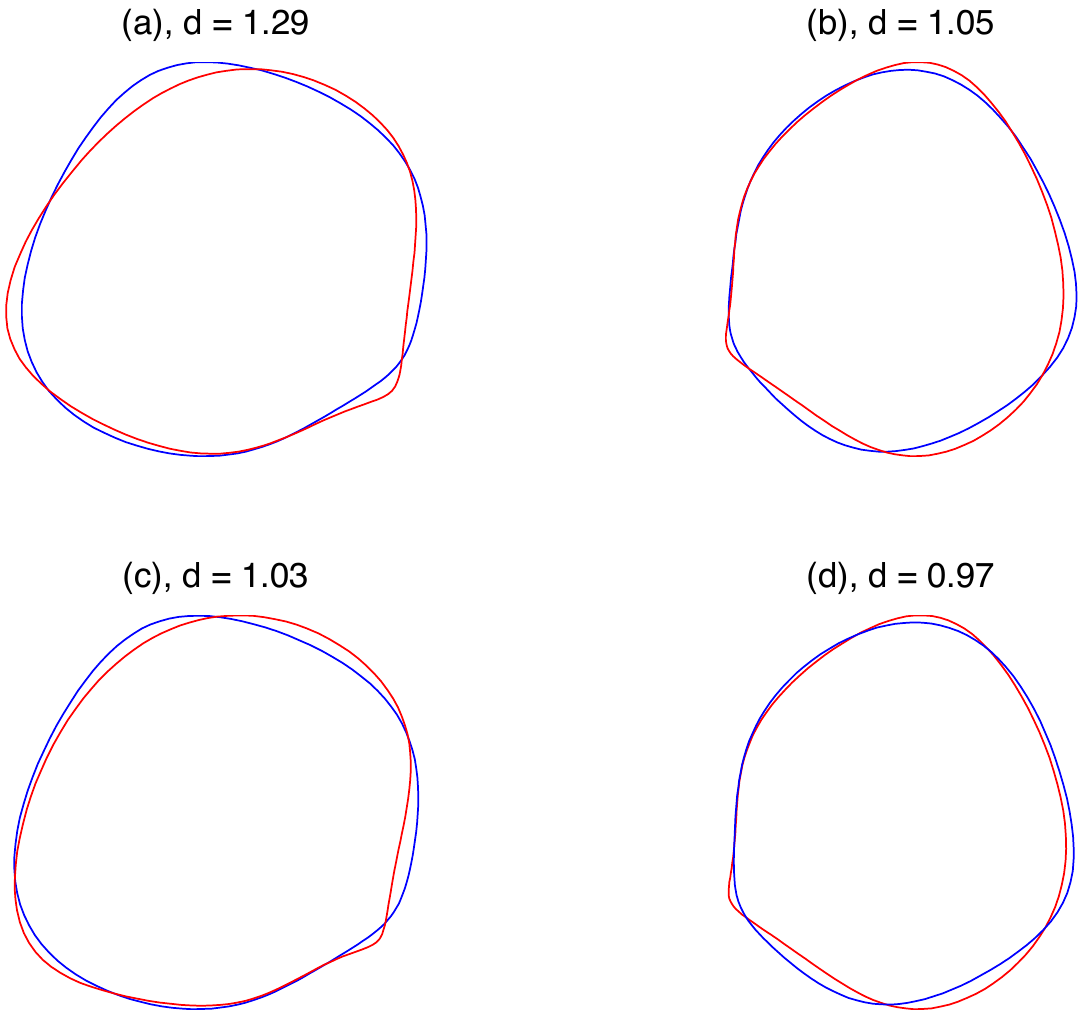}
\caption{Registration \looseness=1 of shapes 8 (blue) \& 10 (red). This pair is closest after M\"obius registration. It~has the 11st closest M\"obius invariants (distance 0.1165) and is the closest pair after similarity registration.
(a)~Similarities act on blue shape; (b)~Similarities act on red shape; (c)~M\"obius acts on blue shape; (d)~M\"obius acts on red shape. Here $d=r_G(x,y)$ where $x$ and $y$ are the two shapes, see equation~\eqref{eq:d1}.}\label{fig:ref0810}
\end{figure}

\begin{figure}[t]\centering
\includegraphics[width=7.9cm]{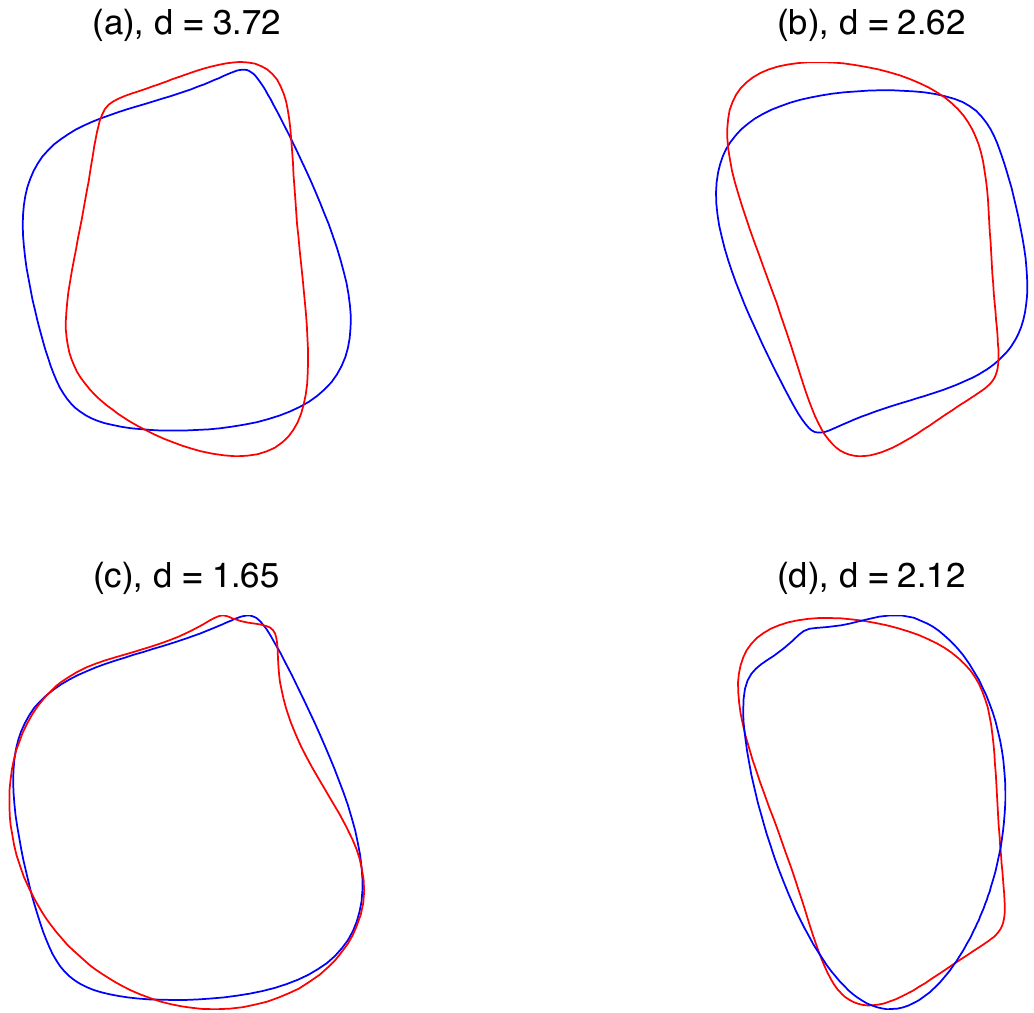}
\caption{Registration of shapes 9 (blue) \& 11 (red). This pair has the 3rd closest M\"obius invariants (distance 0.0798), is 9th closest after M\"obius registration, and 96th closest after similarity registration. (a)~Similarities act on blue shape; (b)~Similarities act on red shape; (c)~M\"obius acts on blue shape; (d)~M\"obius acts on red shape. Here $d=r_G(x,y)$ where~$x$ and~$y$ are the two shapes, see equation~\eqref{eq:d1}.}\label{fig:ref0911}
\end{figure}

Note that this should not be regarded as any kind of `ground truth' for the $G$-similarity of~$z$ and~$w$. It is not $G$-invariant. However, as we shall see, it does correspond remarkably well to the M\"obius invariants described earlier.

To calculate $d_G(z,w)$ numerically, we discretize $\operatorname{Dif\/f}^+(S^1)$ by piecewise linear increasing functions with 16 control points and perform the optimization using Matlab's {\tt lsqnonlin}, with initial guesses for $\varphi$ chosen to be each of 4 rotations and 3 scale factors for $\varphi$, and initial $\psi$ chosen to be the identity.

\looseness=-1 We generated 16 random shapes from a 14-dimensional distribution that favours smooth Jordan curves of similar sizes (where $U(0,2\pi)$ denotes uniform random numbers in the range~$0$ to~$2\pi$):
\begin{gather*} z(t) = \sum_{n=-4}^4 a_n \mathrm{e}^{2\pi \mathrm{i} n t},\qquad
\operatorname{Re}(a_n), \, \operatorname{Im}(a_n) \in N\big(0, 1/\big(1+|n|^3\big)\big), \qquad n\ne \pm 1,\\
\arg a_1\in U(0,2\pi),\qquad |a_1|=1,\qquad a_{-1}/a_1 \in U(0, 0.6).\end{gather*}
The shapes are shown in Fig.~\ref{fig:shapes}. They are all simple closed curves, which we take to be positively oriented. The 120 pairs of distinct shapes are registered in both directions, and the scatter plot between this distance and the 2-norm of the distance between their invariants is shown in Fig.~\ref{fig:distancecorr}. The correlation (0.85) is extremely striking, and suggests that this pair of measures may be related by bounded distortion \eqref{eq:boundeddistortion}, implying that they meet many of the requirements that we identif\/ied in Section~\ref{sec:req}; they are also quick to compute and provide a good numerical approximation.

Some examples of the registrations in the similarity and M\"obius groups are shown for four close pairs in Figs.~\ref{fig:ref1213}--\ref{fig:ref0911}. Including the invariants $L$ and $V$ in the list of invariants did not improve the correlation. Note that the errors in $\|\FCR(\cdot;z,\delta)\|$ observed in Fig.~\ref{fig:ellipseerr} are small enough to allow the separation of all but the closest pairs of shapes, regardless of the level of noise $\varepsilon$.

\subsection{Reversals and ref\/lections}\label{sec:reversals}

Orientation reversals and ref\/lections are examples of actions of discrete groups and can, in theory, be handled by any of the approaches in Section \ref{sec:invariants2}. First, consider the sense-reversing repara\-me\-te\-ri\-za\-tions, $z(t)\mapsto z(-t)$. These map the SCR invariant as $\SCR(\lambda;z,\delta) \mapsto \SCR(-\lambda;z,\delta)$ and hence $\FCR(n) \mapsto \FCR(-n)$. It is convenient to pass to the equivalent invariants:
\begin{gather*}
x_n = \begin{cases}
\FCR(n)-\FCR(-n), & n>0, \\
\FCR(n)+\FCR(-n), & n\le 0,
\end{cases}
\end{gather*}
for which $x_n$ is invariant for $n\le 0$ and $x_n\mapsto -x_{-n}$ for $n>0$. Suppose that we wish to identity curves with their reversals, that is, to work with unoriented shapes. Some options are the following:
\begin{enumerate}\itemsep=0pt
\item The moving frame method: the shapes are put into a reference orientation f\/irst. This is only possible if the problem domain is restricted suitably; in this case, for example, to simple closed curves, which can be taken to be positively oriented. This is the approach that we have taken in Sections~\ref{sec:cr} and~\ref{sec:comparison}. If the problem domain includes non-simple curves this approach may not be possible. For example, in the space of plane curves with the topology of a f\/igure 8, each such shape can be continuously deformed into its reversal; thus we cannot assign them orientations, since they vary continuously with the shape.
\item Finding a complete set of invariants: this is $x_n$ for $n\le0 $ and $x_i x_j$ for $i,j>0$. Again we see that the quotient by a relatively simple group action is expensive to describe completely using invariants.
\item Use an incomplete set of invariants that is ``good enough'': here, using $x_n$ for $n\le 0$ and $x_n x_{n+1}$ for $n>0$ is a possibility. This creates a complicated ef\/fect on the metric used to compare invariants.
\item As the group action is of a standard type, one can work in unreduced coordinates ($x_n$) together with a natural metric induced by the quotient, such as some function of $\|x\|-\|y\|$ and the Fubini--Study metric on projective space, $\cos^{-1}(|\bar x^T y|/\|x\|\|y\|)$.
\item Finally, and most easily in this case, for f\/inite groups one can represent points in the quotient as entire group orbits. A~suitable metric is then
\begin{gather*}
d(x,y) = \min_{g\in G}\|x - g\cdot y\|,
\end{gather*}
where $G$ is the group. Although this is impractical for large f\/inite groups, here $G$ is $\mathbb{Z}_2$.
\end{enumerate}
Similar considerations apply to recognizing shapes modulo the full inversive group, generated by the M\"obius group together with a ref\/lection, say $z\mapsto \bar z$. The ref\/lection maps $\FCR(n)\mapsto \overline{\FCR(n)}$ and hence is an action of the same type as reversal (a sign change in some components). The ref\/lection symmetry of the ellipse in Fig.~\ref{fig:ellipse}, for example, can be detected by the ref\/lection symmetry of the cross-ratio signature $\SCR(\lambda)$ in Fig.~\ref{fig:ellipseerr3}. (Its second discrete symmetry, a rotation by $\pi$, is manifested in Fig.~\ref{fig:ellipseerr3} by the signature curve retracting itself twice.)

The invariants developed here are for closed shapes. They can be adapted for other types of shapes (for example, shapes with the topology of two disjoint circles), but as the topology gets more complicated (for example, shapes consisting of many curve segments) the problem becomes signif\/icantly more dif\/f\/icult.

\section{M\"obius invariants of images}\label{sec:images}

Let $f \colon \C \to [0,1]$ be a smooth grey-scale image. Dif\/feomorphisms act on images by $\varphi\cdot f := f\circ \varphi^{-1}$. It is easy, in principle, to adapt the M\"obius shape invariant $\SCR$ to images by computing level sets of $f$, each of which is an invariant shape for which $\SCR$ can be calculated. In addition, if $\varphi$ is conformal, the orthogonal trajectories of the level sets, i.e., the shapes tangent to $\nabla f$, are also invariant shapes. In the neighbourhood of a simple closed level set, coordinates $(\lambda,\mu)$ can be introduced, where $\lambda$ is M\"obius arclength along the level set and $\mu$ is M\"obius arclength along the orthogonal trajectories. The quantity
\begin{gather}\label{eq:cri}
\CR(z(\lambda,\mu), z(\lambda,\mu+\delta), z(\lambda+\delta,\mu+\delta),z(\lambda+\delta,\mu)),
\end{gather}
calculated from the cross-ratio of 4 points in a square, is then invariant under the M\"obius group, and reparameterizations of the level set act as translations in $\lambda$.

In practice, however, the domain of this invariant is quite restricted. The topology of level sets is typically very complicated and the domain of~$f$ may be restricted, so that level sets can stop at the edge of the image. Restricting to level sets of grey-scales near the maximum and minimum of~$f$ helps, but this is a severe restriction. Instead, we shall show that the extra information provided by an image, as opposed to that provided by a~shape, determines a dif\/ferential invariant signature using only 3rd derivatives, compared to the 5th derivatives needed for dif\/ferential invariants of shapes. Because of this, we do not develop the cross-ratio invariant~(\ref{eq:cri}) any further here.

\begin{Proposition} Let $f \colon \C\to\R$ be a smooth grey-scale image. Let $R\subset\C$ be the regular points of $f$. Identify $x_1+{\rm i}x_2\in\mathbb{C}$ with $(x_1,x_2)\in\R^2$ so that $\nabla f$ is the standard Euclidean gradient. On $R$, define
\begin{gather*}
n := \frac{\nabla f}{ \| \nabla f\|},\qquad
\lambda_n := \frac{n\cdot \nabla (\nabla\times n)}{\|\nabla f\|^2},\qquad
\lambda_t := \frac{n\times \nabla(\nabla\cdot n)}{\|\nabla f\|^2}.\end{gather*}
Then
\begin{gather}\label{eq:sig}
(f, \lambda_n, \lambda_t)(R)
\end{gather}
is a subset of $\R^3$ that is invariant under the action of the M\"obius group on images.
\end{Proposition}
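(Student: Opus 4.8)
The plan is to establish invariance of the triple $(f,\lambda_n,\lambda_t)$ under the generators of the Möbius group, exploiting the fact that $\mathrm{PSL}(2,\mathbb{C})$ is generated by the similarities $z\mapsto az+b$ together with the single inversion $z\mapsto 1/z$. Since $f$ transforms by $\varphi\cdot f = f\circ\varphi^{-1}$, invariance of the \emph{signature} $(f,\lambda_n,\lambda_t)(R)$ means that if we push the image forward by $\varphi$, the quantities $\lambda_n$ and $\lambda_t$ recomputed at the moved point agree with their old values at the preimage; equivalently $\lambda_n$ and $\lambda_t$ are scalar functions that transform by composition with $\varphi^{-1}$, exactly as $f$ does. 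First I would verify the easy generators: under Euclidean rotations and translations the unit normal $n=\nabla f/\|\nabla f\|$, the operators $\nabla\cdot$, $\nabla\times$, $\nabla$, and the scaling weight $\|\nabla f\|^2$ all transform in the obvious covariant way, so $\lambda_n$ and $\lambda_t$ are manifestly invariant. The nontrivial content is to check that the particular combinations $n\cdot\nabla(\nabla\times n)/\|\nabla f\|^2$ and $n\times\nabla(\nabla\cdot n)/\|\nabla f\|^2$ are also invariant under \emph{dilations} $z\mapsto az$ (here the $1/\|\nabla f\|^2$ factor is essential to cancel the length-scaling) and, crucially, under the inversion.

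The key step, therefore, is the inversion $z\mapsto 1/z$. I would work in the conformal setting: an orientation-preserving conformal map $\varphi$ scales lengths locally by the conformal factor $\rho(z)=|\varphi'(z)|$ and rotates the tangent plane, so $n$ is rotated by $\arg\varphi'$ and the gradient magnitude picks up $\|\nabla(f\circ\varphi^{-1})\| = \|\nabla f\|/\rho$ (evaluated at corresponding points). The quantities $\nabla\cdot n$ and $\nabla\times n$ are, up to sign, the geodesic curvatures of the level sets and their orthogonal trajectories; these are \emph{not} conformally invariant individually, but acquire correction terms involving derivatives of $\log\rho$. The plan is to compute how $\nabla\cdot n$ and $\nabla\times n$ transform under a general conformal map, then apply the operators $n\cdot\nabla$ and $n\times\nabla$, and finally divide by $\|\nabla f\|^2$, showing that all $\rho$-dependent correction terms cancel. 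I expect the decisive simplification to come from the fact that $\log\rho$ is \emph{harmonic} for a holomorphic $\varphi$ (being the real part of $\log\varphi'$), so that the second-order correction terms organize into harmonic combinations that are annihilated by the specific curl/divergence pattern built into $\lambda_n$ and $\lambda_t$; it is precisely this harmonicity that distinguishes Möbius (conformal) invariance from mere affine invariance.

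The main obstacle will be the inversion computation itself: bookkeeping the conformal-factor corrections through two orders of differentiation is algebraically heavy, and one must track the interplay between the rotation of the frame $n$ and the gradients of $\log\rho$. A cleaner route, which I would attempt first to avoid brute force, is to recognize $\lambda_n$ and $\lambda_t$ as built from the Schwarzian-type second-order conformal invariants of the foliation: since the excerpt has already established that $\operatorname{Im}S(z)$ and the Möbius arclength density are conformally invariant, and that Möbius curvature arises as a ratio of curvature derivatives divided by an appropriate power of the arclength weight, I would try to identify $\lambda_n$ and $\lambda_t$ as the image-analogues of $\d\lambda$ applied along the level set and its orthogonal trajectory respectively. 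If that identification succeeds, invariance follows immediately from the already-proven invariance of Möbius arclength, reducing the Proposition to a coordinate computation expressing $\lambda_n,\lambda_t$ in terms of $\nabla\cdot n$ and $\nabla\times n$ rather than a fresh transformation-law verification. The risk is that the precise powers of $\|\nabla f\|$ in the definitions differ from a naive guess, so reconciling the normalization is where care is required.
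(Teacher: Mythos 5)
Your ``cleaner route'' is exactly the paper's proof: the paper identifies $\nabla\cdot n$ as the level-set curvature $\kappa$ (and $\nabla\times n$ as the curvature of the orthogonal trajectories), observes that $n\times\nabla$ differentiates along level-set arclength so that $\lambda_t=\kappa_s/\|\nabla f\|^2$, and then deduces invariance from the already-established M\"obius invariance of $\d\lambda=\sqrt{|\kappa_s|}\,\d s$ combined with the fact that $\d s$ and $1/\|\nabla f\|$ scale by the same conformal factor (the sign of $\kappa_s$ being invariant too), which settles your worry about the normalization. So your proposal is correct and takes essentially the same approach as the paper; the generator-by-generator inversion computation of your first plan (and the harmonicity of $\log\rho$) is never needed.
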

\begin{proof} As def\/ined above, $n$ is the unit vector f\/ield normal to the level sets of~$f$. Let $n^\perp$ be the unit vector tangent to the level sets given by $n^\perp_i = \varepsilon_{ij}n_j$. (Here $i,j=1,2$ and $\varepsilon_{ij}$ is the Levi-Civita symbol def\/ined by $\varepsilon_{11}=\varepsilon_{22}=0$, $\varepsilon_{12}=1$, $\varepsilon_{21}=-1$; we sum over repeated indices and write $n_{i,k} = \partial n_i/\partial x_j$.) From the Frenet--Serret relation $n_s = \kappa n^\perp$, where $s$ is arclength along the level sets, we have that the curvature of the level sets is
\begin{align*}
\kappa &= n^\perp\cdot n_s = n^\perp\cdot((n^\perp\cdot\nabla)n)= \varepsilon_{ij} n_j \varepsilon_{kl}n_l n_{i,k}
 =(\delta_{ik}\delta_{jl} - \delta_{il}\delta_{jk}) n_j n_l n_{i,k} \\
&= n_j n_j n_{k,k} - n_k n_i n_{i,k} = n_{k,k} \quad \hbox{\rm (because\ }n_j n_j = 1\Rightarrow n_i n_{i,k}=0\hbox{\rm\ for all\ }k)\\
&= \nabla \cdot n.
\end{align*}
Recall that the M\"obius arclength of the level sets of $f$ is $\mathrm{d}\lambda := \sqrt{|\kappa_s|}\mathrm{d}s$. Under any conformal map, the scaling along and normal to the level sets is the same, and thus $\mathrm{d}s$ and $1/\|\nabla f\|$ both scale by the same factor. Therefore $\sqrt{|\kappa_s|}/\|\nabla f\|$ is invariant, as is its square $|\kappa_s|/\|\nabla f\|^2$. The sign of $\kappa_s$ is also invariant under M\"obius transformations, resulting in the given invariant $\lambda_t = \kappa_s/\|\nabla f\|^2$.

The invariant $\lambda_n$ arises in the same way from the orthogonal trajectories, whose curvature is $\nabla\cdot n^\perp = \nabla\times n$.
\end{proof}

\begin{Example} As a test image we take the smooth function
\begin{gather}\label{eq:f}
f(x,y) = e^{-4x^2-8\big(y-0.2 x - 0.8 x^2\big)^2}
\end{gather}
and calculate its invariant signature before and after the M\"obius transformation with parameters
\begin{gather}\label{eq:abcd}
 a = 0.9 + 0.1\mathrm{i},\qquad b = 0.1,\qquad c = 0.1+0.4\mathrm{i},\qquad d = 1.
 \end{gather}
on the domain $[-1,1]^2$. The invariants are approximated by f\/inite dif\/ferences with mesh spacing $1/80$, corresponding to $161\times 161$ pixel images. The invariants are shown as functions of $(x,y)$ in Fig.~\ref{fig:lamn} for~$\lambda_n$ and Fig.~\ref{fig:lamt} for~$\lambda_t$. The resulting signature surfaces, shown for~$f$ in Fig.~\ref{fig:sig3D} in~$\R^3$, are quite complicated. A useful way to visualize and compare them is shown in Fig.~\ref{fig:sig}. For example, one can plot the contours of~$f$ in the $(\lambda_n,\lambda_t)$ plane, and similarly for other projections. This enables a sensitive comparison of the signatures of the image and its M\"{o}bius transformed version and reveals that they are extremely close.
\end{Example}

\begin{figure}[t!]\centering
\includegraphics[width=7.7cm]{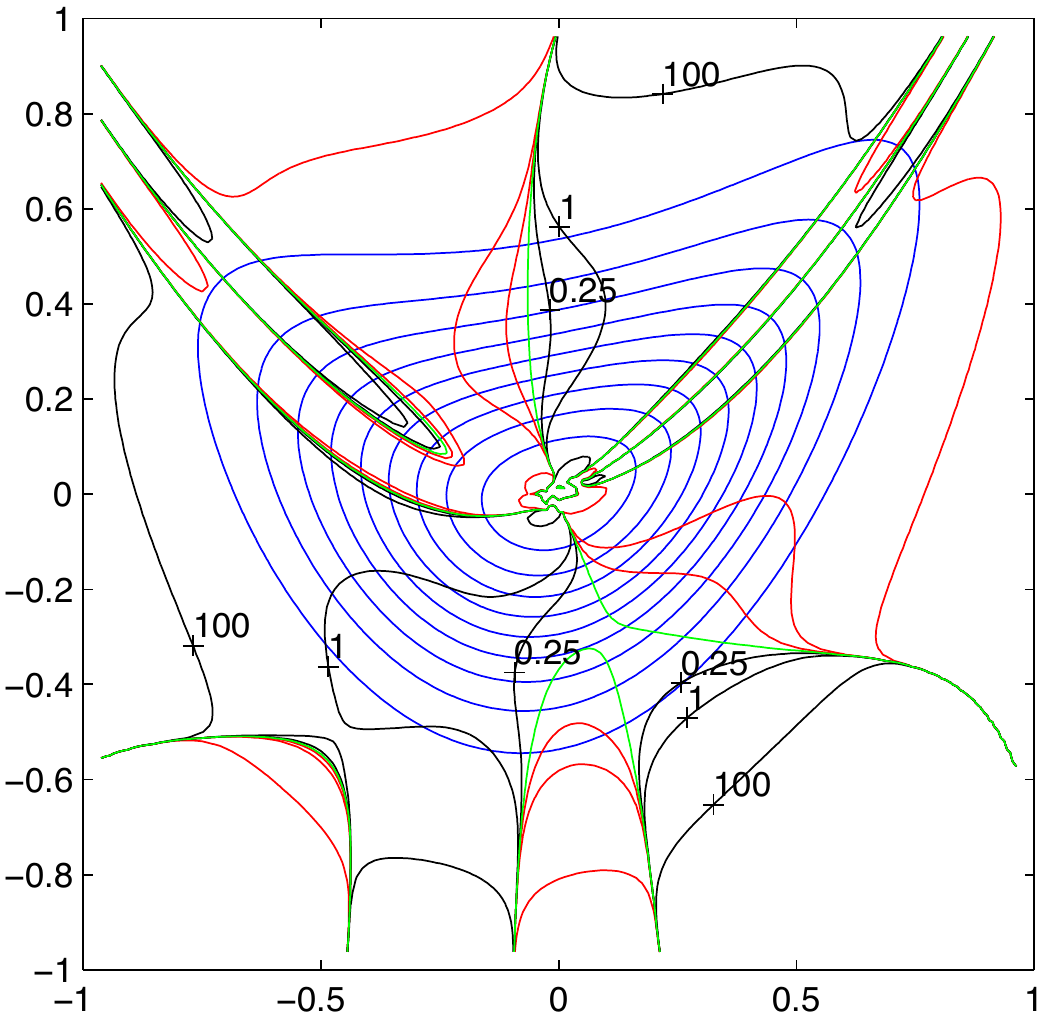}
\smallskip

\includegraphics[width=7.7cm]{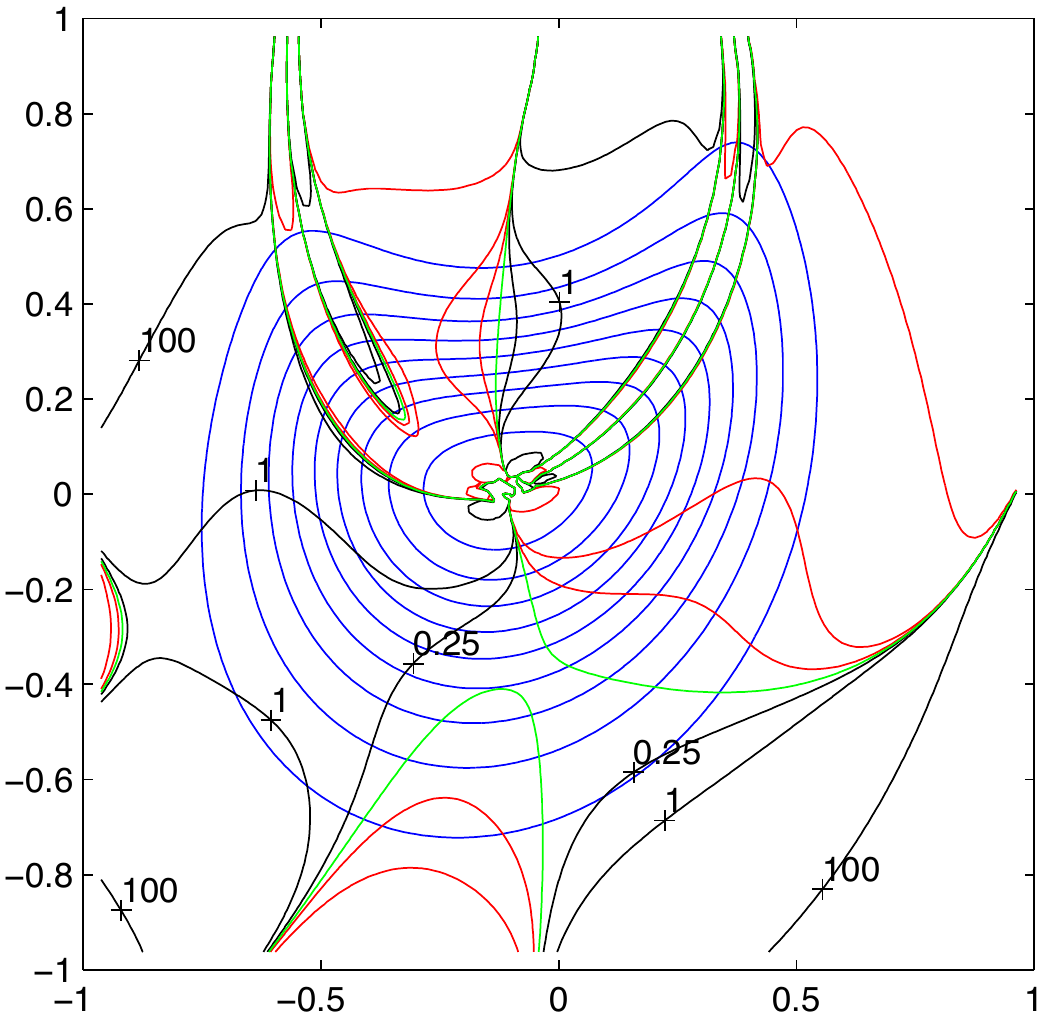}
\caption{Contours 0.1,0.2,\dots,0.9 of a function are shown in blue, together with its invariant $\lambda_n$: contour 0 in green, contours $-0.25$, $-1$, and $-100$ in red, and contours 0.25, 1, and 100 in black. Top: function $f$ from (\ref{eq:f}). Bottom: M\"obius related function $f\circ\varphi^{-1}$, parameters in (\ref{eq:abcd}). The invariance can be seen, along with the way that $\lambda_n$ typically blows up as $\nabla f\to 0$. A small discretization error is visible in the top f\/igure: the saddle point near $(-0.5,-0.5)$ has $\lambda_n\approx 1.07$, whereas the exact value is 0.94. This results in the wrong topology of the $+1$ contour (cf.\ bottom f\/igure near $(-0.8,-0.2)$).}\label{fig:lamn}
\end{figure}

\begin{figure}[t!]\centering
\includegraphics[width=7.8cm]{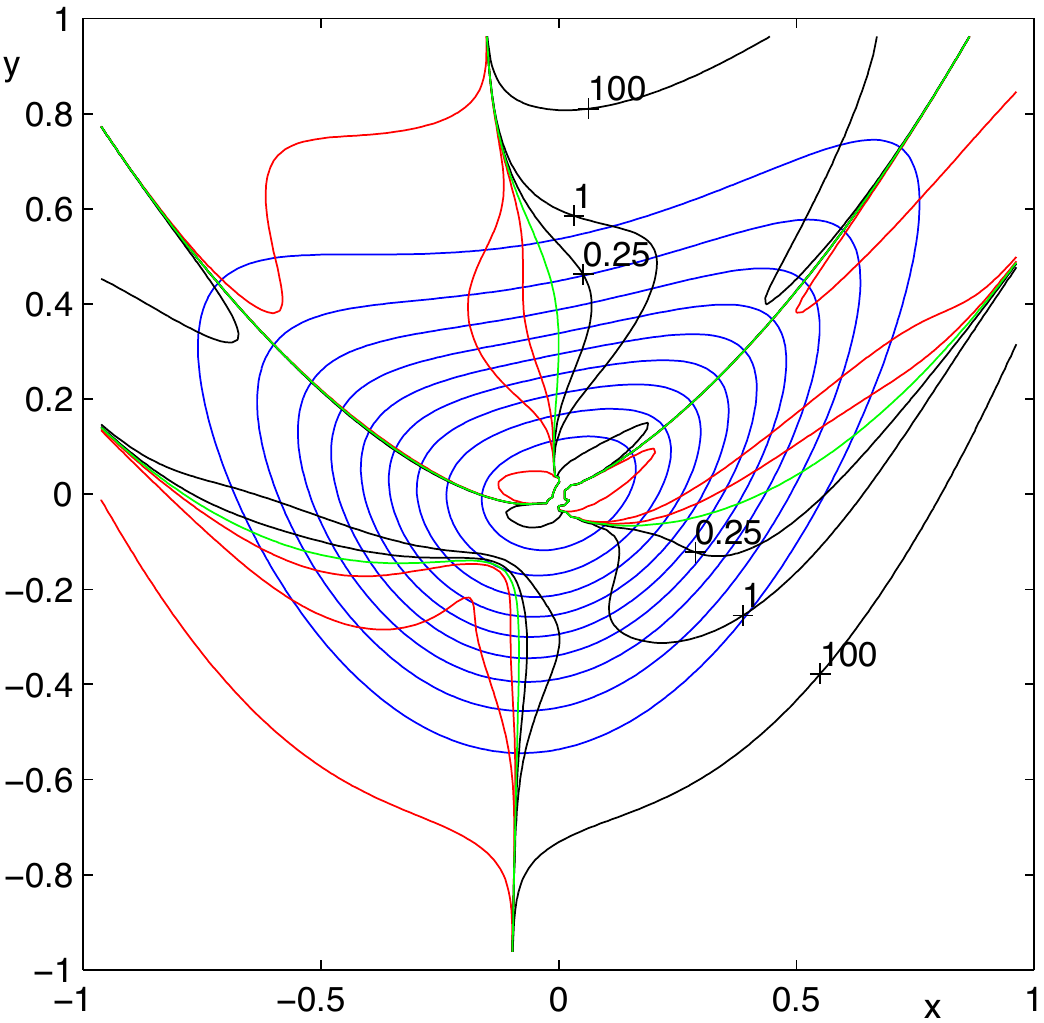}
\smallskip

\includegraphics[width=7.8cm]{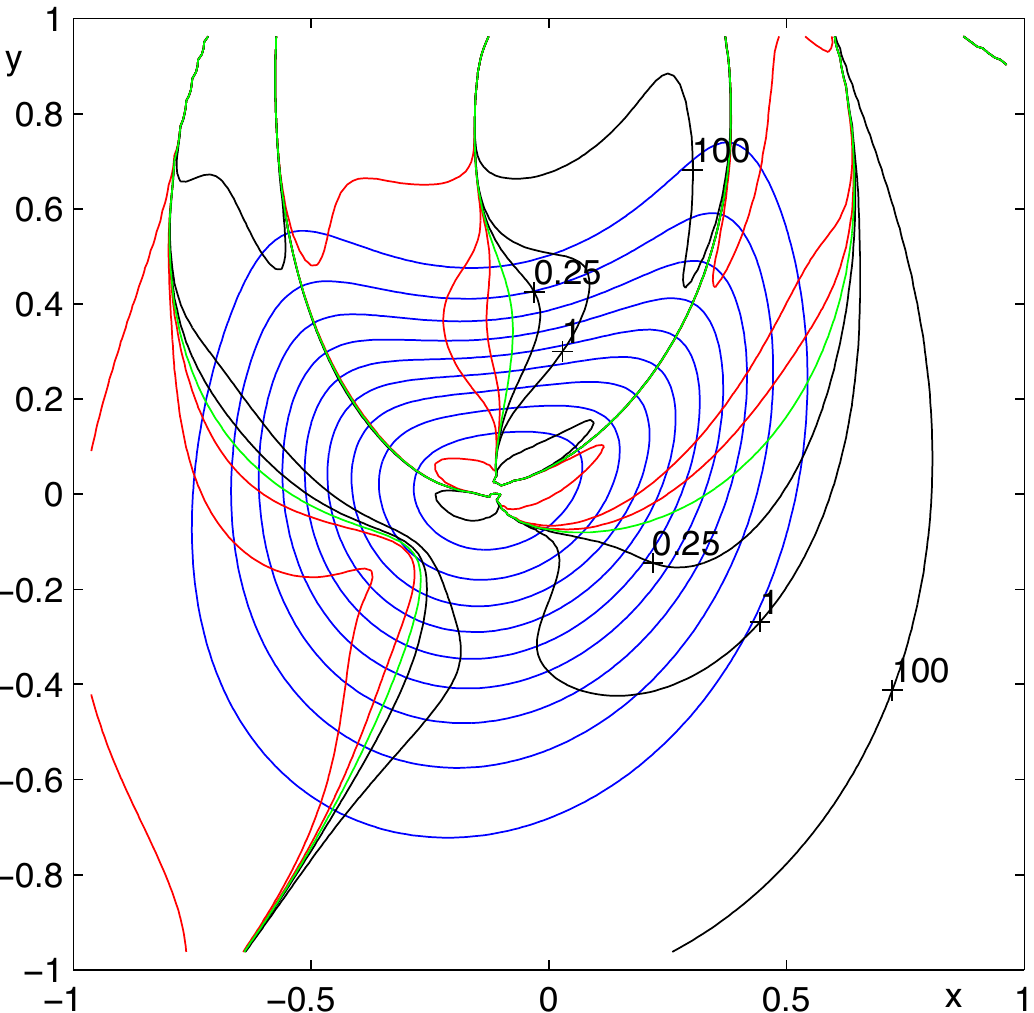}
\caption{Contours 0.1,0.2,\dots,0.9 of a function are shown in blue, together with its invariant $\lambda_t$: contour 0 (which locates vertices (points of stationary curvature) of the level sets) in green, contours $-0.25$, $-1$, and $-100$ in red, and contours 0.25, 1, and 100 in black. Top: function $f$ from (\ref{eq:f}). Bottom: M\"obius related function $f\circ\varphi^{-1}$, parameters in (\ref{eq:abcd}). }\label{fig:lamt}\vspace{-2.5mm}
\end{figure}

\begin{figure}[t!]\centering
\begin{minipage}[b]{0.4\textwidth}
\centering
\includegraphics[width=0.6\textwidth]{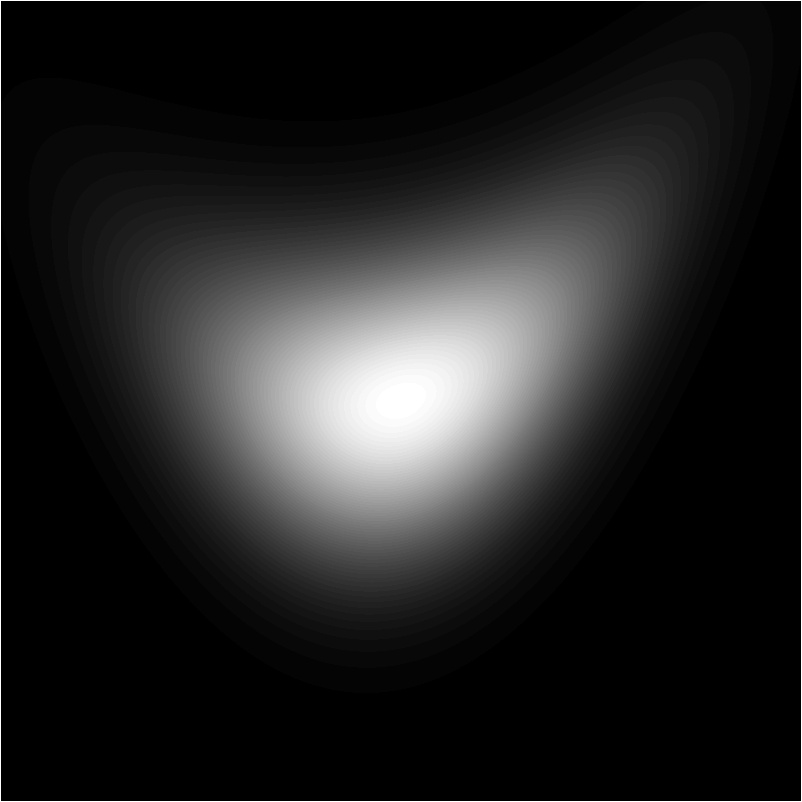}
\begin{overpic}[width=0.95\textwidth]{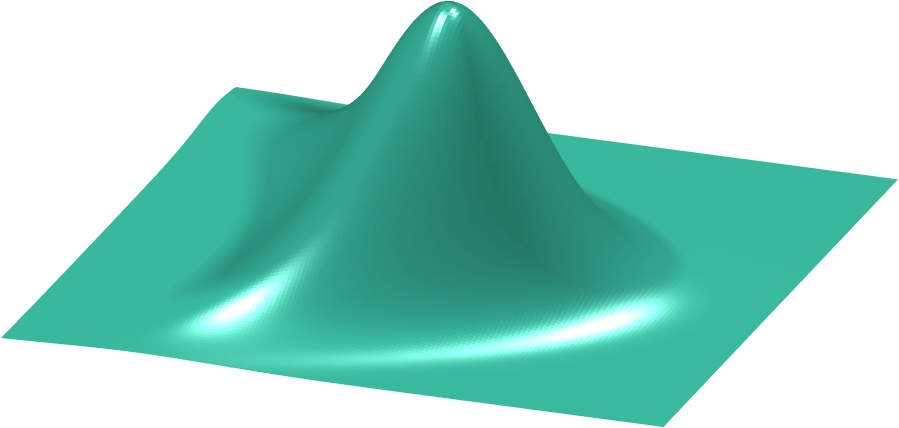}
\put (75,-5) {\small $-1$}
\put (90,11) {\small$x$}
\put (102,25) {\small$1$}
\put (0,5) {\small$1$}
\put (30,0) {\small$y$}
\end{overpic}
\end{minipage}
\begin{minipage}[b]{0.6\textwidth}\centering
\includegraphics[width=95mm]{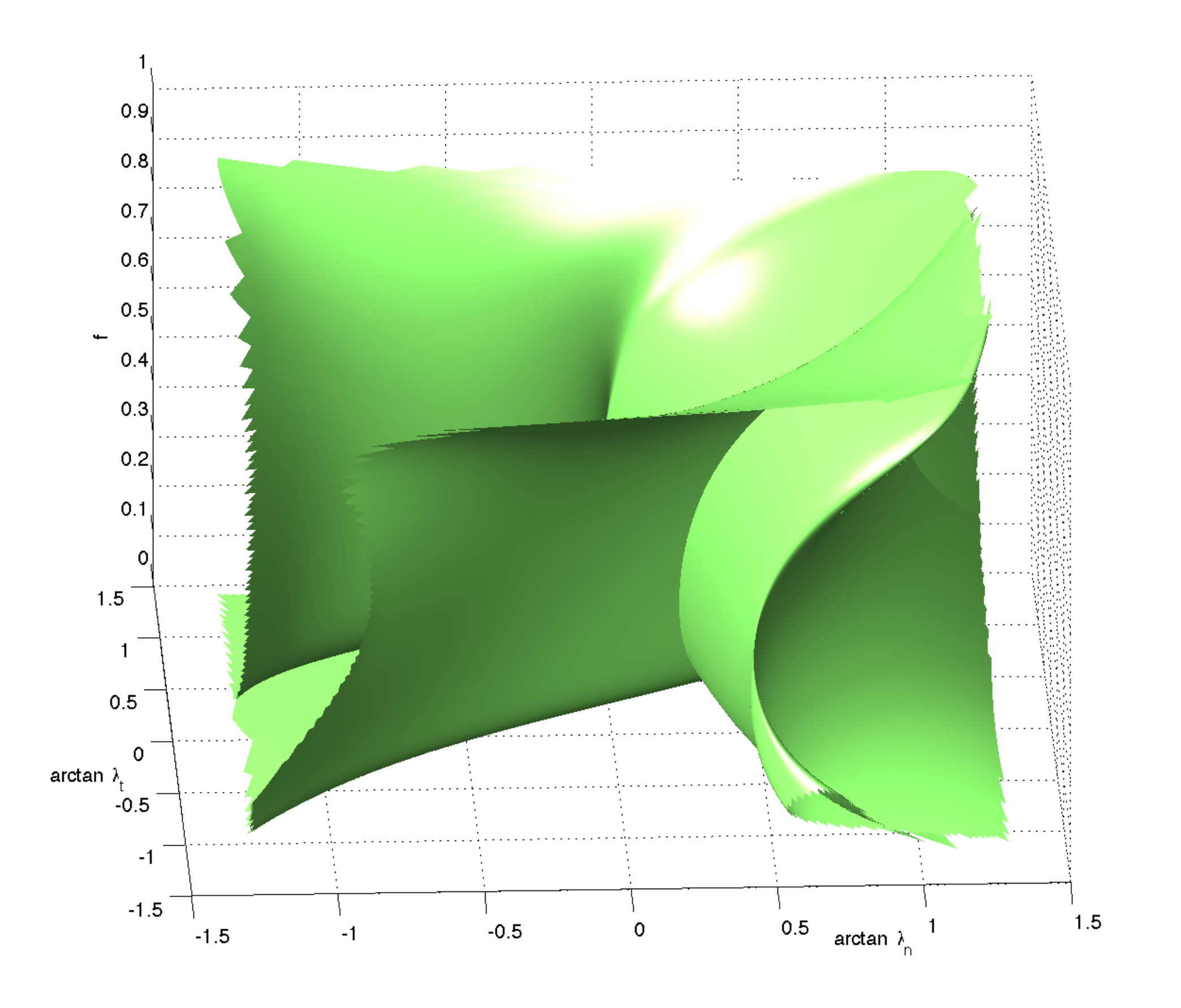}
\end{minipage}
\caption{The sample image def\/ined in equation~(\ref{eq:f}) is shown in grayscale (top left) and as a graph $(x,y,f(x,y))$ (bottom left). Its M\"obius signature surface~(\ref{eq:sig}) is shown at right.}\label{fig:sig3D}\vspace{-1mm}
\end{figure}

\begin{figure}[t!]\centering
\includegraphics[width=130mm]{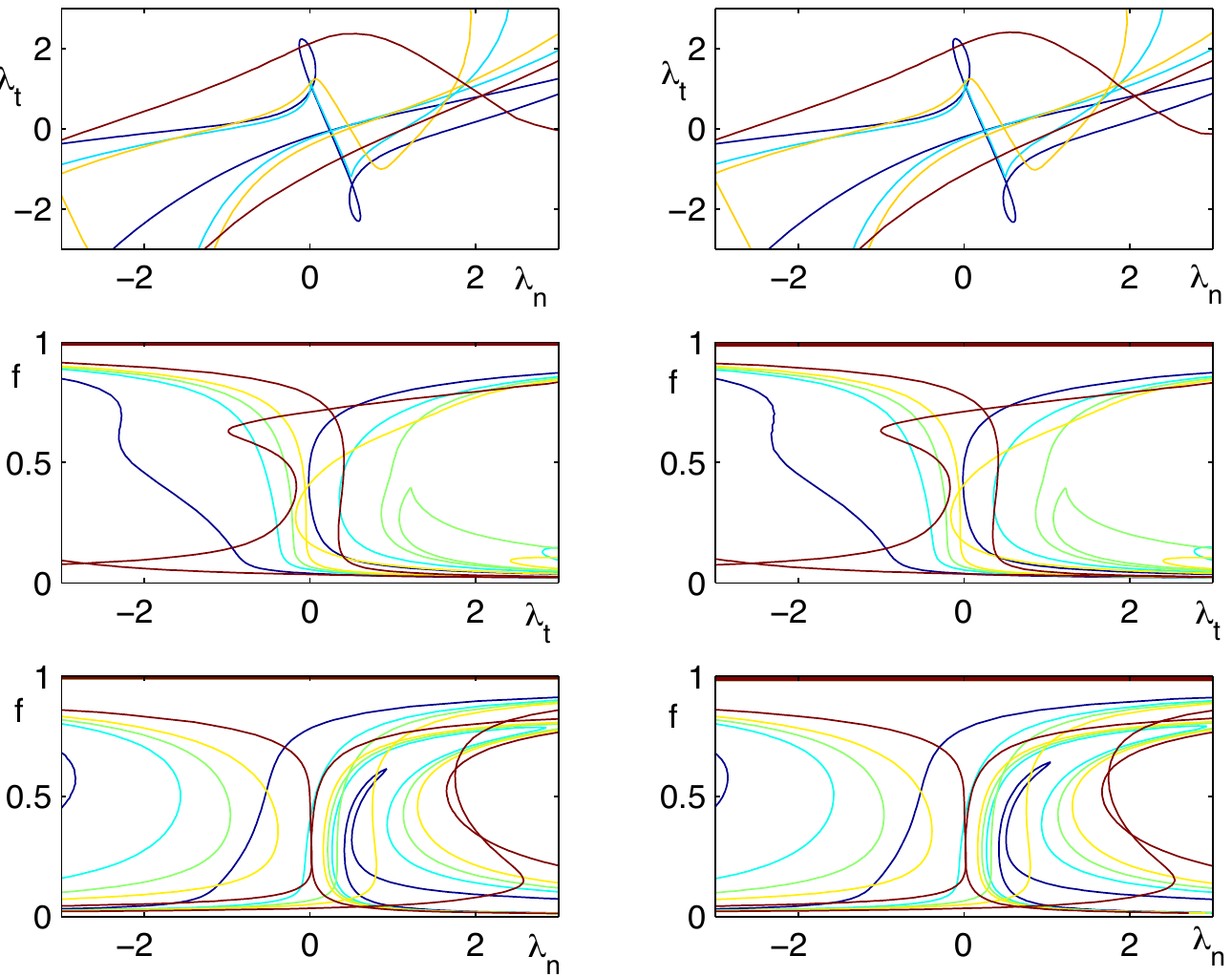}
\caption{The invariant signature $(f,\lambda_n,\lambda_t)$ shown for $f$ in the left column and for $f\circ\varphi^{-1}$ in the right column. Top: contours 0.2, 0.4, 0.6, and 0.8 of~$f$; middle and bottom: contours $-1$, $-0.25$, 0, 0.25, and~1 of~$\lambda_n$ (resp.~$\lambda_t$). The two invariants are almost identical in appearance (see, e.g., the $-1$ (dark blue) contour of $\lambda_t$ near $(\lambda_n,f)=(1,0.6)$, which is slightly dif\/ferent in the left and right columns).}\label{fig:sig}
\end{figure}

\begin{Example}\label{example3} \looseness=1 As a more numerical example, we take 9 similar blob-like functions, constructed as the sum of four random 2D Gaussian functions, and their M\"obius images under a random Mobius transform, and compare their invariant signatures. The functions and their Mobius-transformed variants $f\circ\varphi^{-1}$ are shown in Fig.~\ref{fig:9blobs1} as level set contours, while the invariant signatures are shown in Fig.~\ref{fig:9blobs3}. Because the whole invariant signature surfaces are very complicated, we show just the signature curve corresponding to the level set $f^{-1}(0.5)$. This depends only on the f\/irst 3 derivatives of~$f$ on the level set. Because~$\lambda_n$ and $\lambda_t$ take values in $[-\infty,\infty]$, we use coordinates $(\arctan(\lambda_t/4),\arctan(\lambda_n/4))$. Clearly, even this very limited portion of the signature serves to distinguish the M\"obius-related pairs extremely sensitively. In some cases, the invariants change extremely rapidly along the level set, so that even though they are eva\-luated accurately, the resulting contours of the M\"obius-related pairs do not overlap. This would need to be taken into account in the development of a distance measure on the invariant signatures.
\end{Example}

\begin{figure}[t!]\centering
\includegraphics[width=6cm]{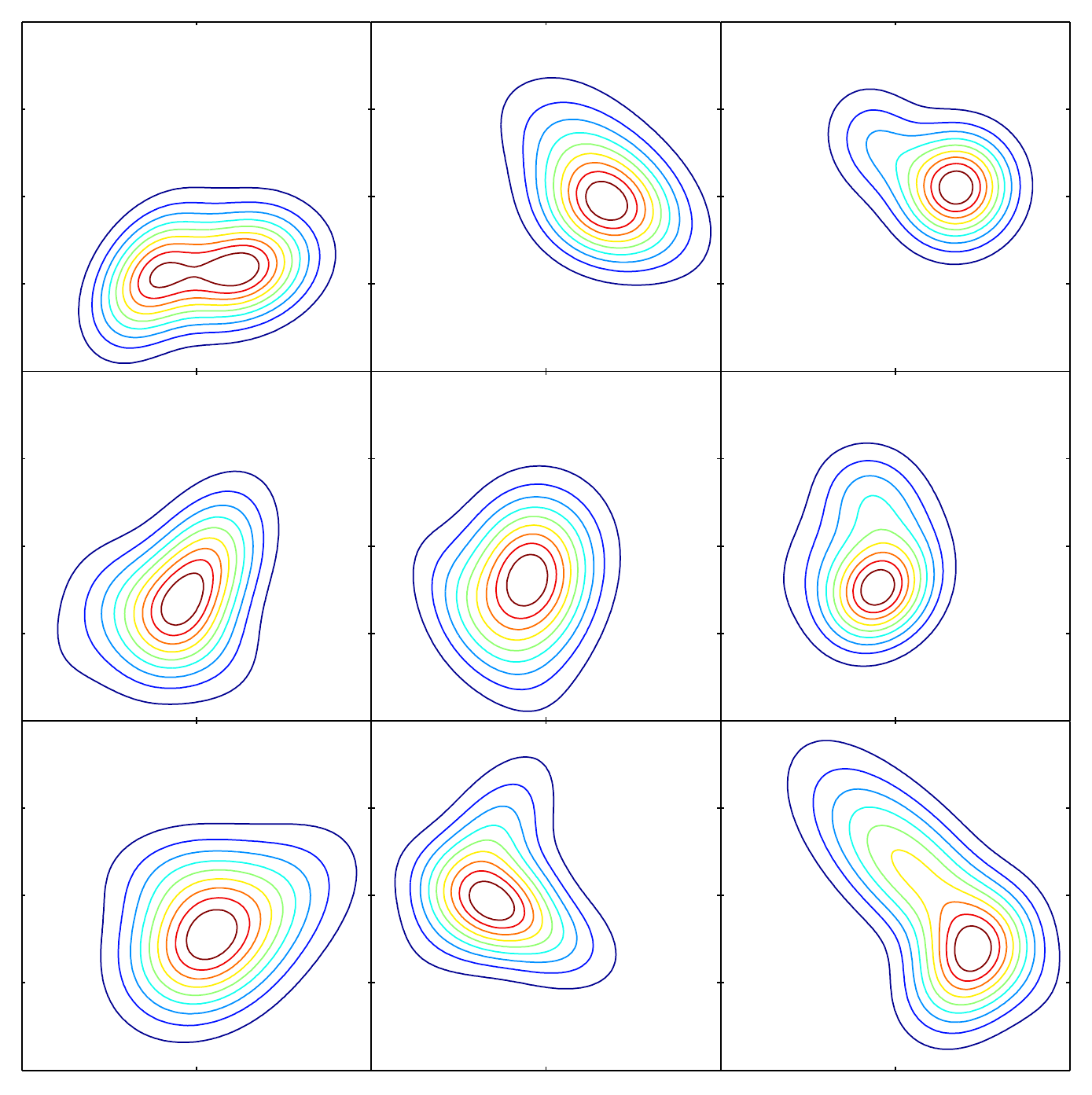} \quad
\includegraphics[width=6cm]{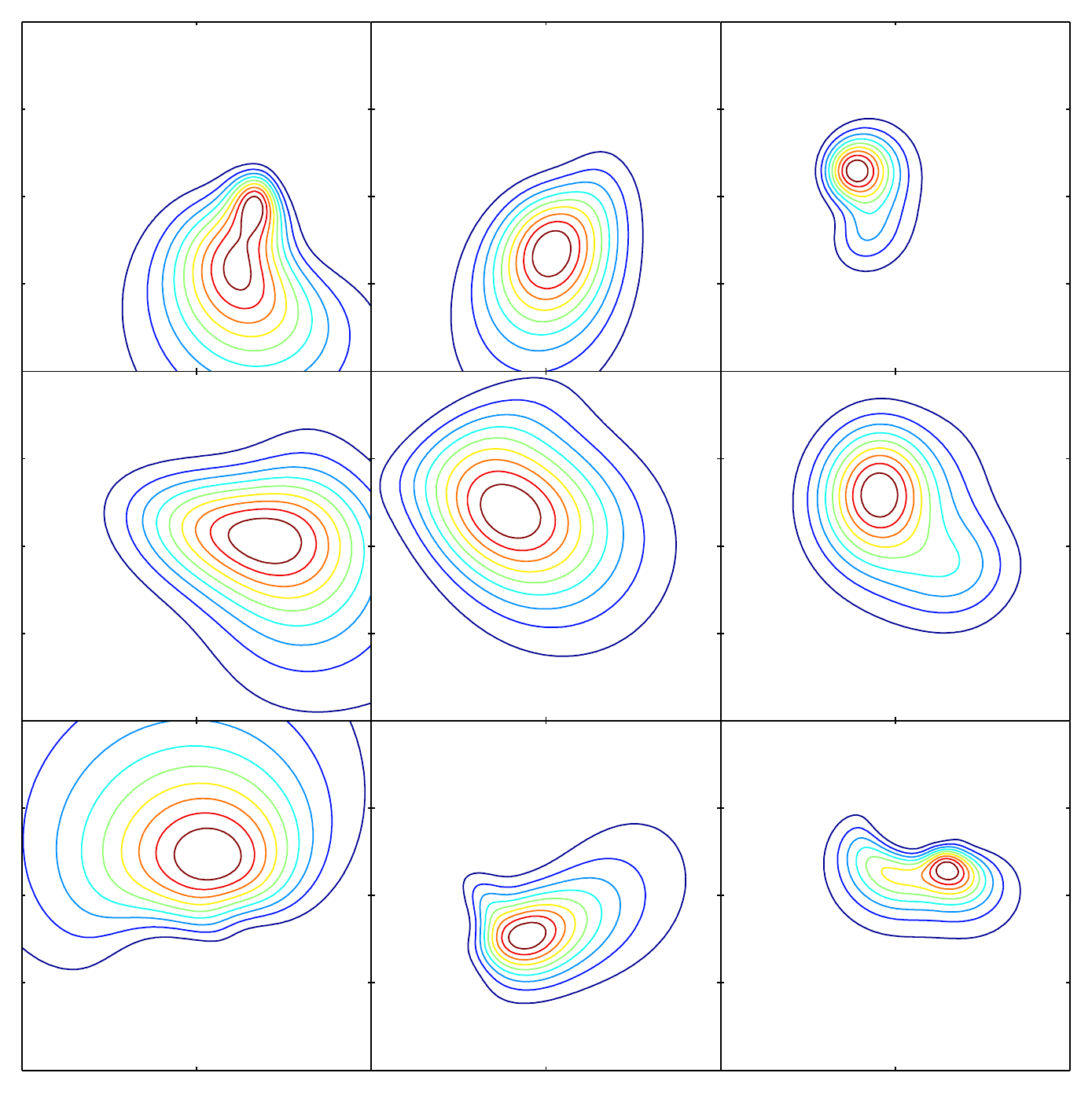}
\caption{Nine random blob-like functions are shown on the left. Each is given by the sum of 4 random Gaussians, with the range of the resulting function scaled to $[0,1]$. The domain is $[-1,1]^2$ and the functions are discretized with $h=1/80$ giving $161\times161$ grey-scale images. For each of the 9 functions~$f$, a~random M\"obius transformation $\varphi$ is chosen and the composition $f\circ\varphi^{-1}$ shown on the right, evaluated on the domain $[-1,1]^2$. The transformations have parameters $b=0$, $a$ uniform in an annulus with inner radius 0.7 and outer radius 1.3, $d=1$, and $c$ with uniform argument and normal random modulus with standard deviation 0.6. The contours 0.1, 0.2,\dots,0.9 of the functions are shown.}\label{fig:9blobs1}
\end{figure}

\begin{figure}[t!]\centering
\includegraphics[width=6cm]{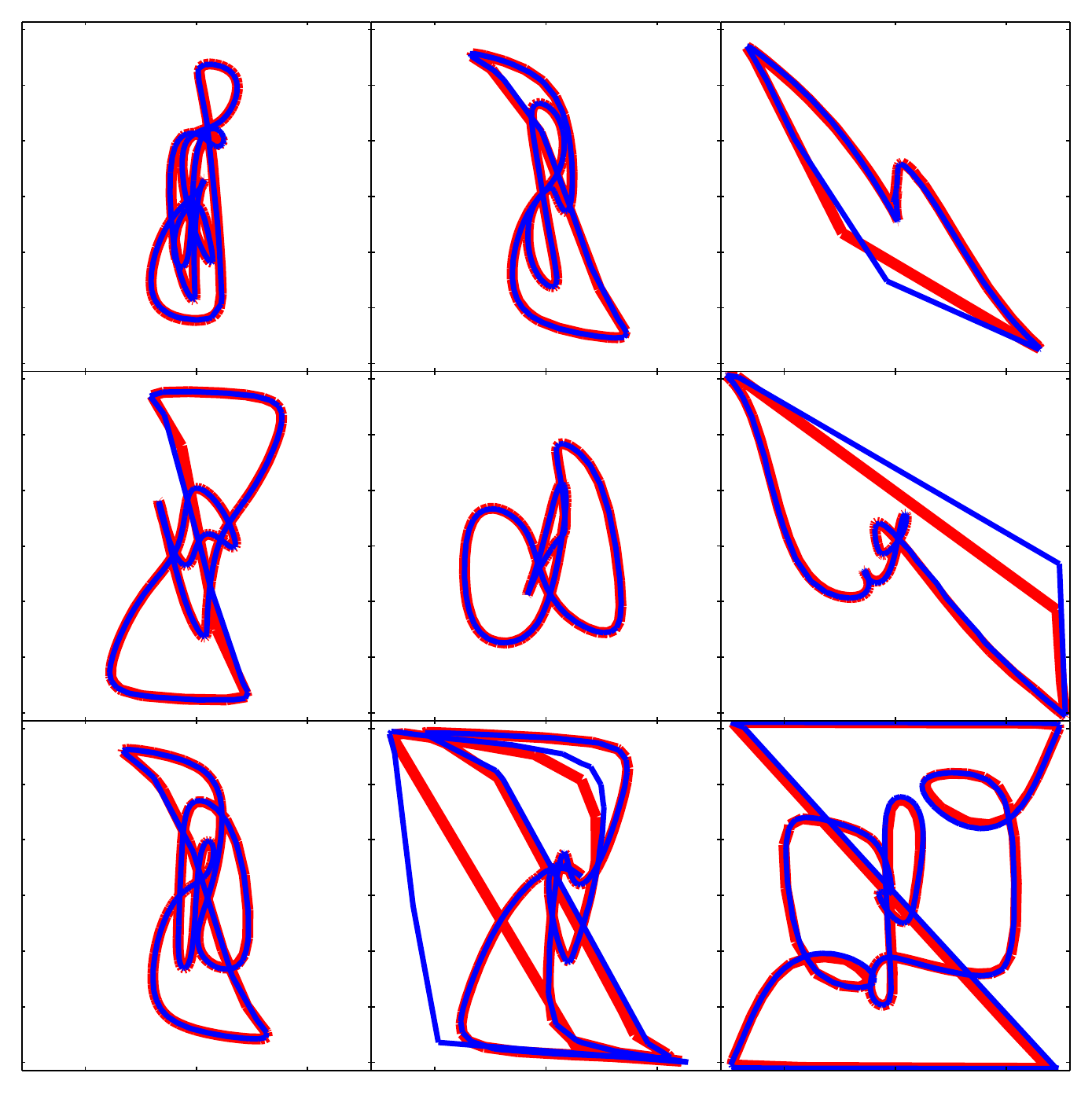}
\caption{The invariant signature $(\arctan(\lambda_t/4),\arctan(\lambda_n/4))$ evaluated on the level set $f^{-1}(0.5)$ is calculated by central dif\/ferences for each of the images in Fig.~\ref{fig:9blobs1} left (shown in blue) and for the corresponding images in Fig.~\ref{fig:9blobs1} right (shown in red). The domains are $[-\pi/2,\pi/2]^2$. The signature curves distinguish the M\"obius-related pairs very sensitively; only tiny f\/inite dif\/ference errors are visible. However, some errors related to insuf\/f\/icient resolution of the signature curves are clearly visible.}\label{fig:9blobs3}
\end{figure}

\begin{Example}In this example we illustrate the extreme sensitivity of the invariant signature by evaluating it on 9 very similar images, together with their M\"obius transformations. Each original image is a blob function generated as in Example~\ref{example3}, but with parameters varying only by~$\pm 5\%$. The M\"obius transformations have the form $1/(1 + c z)$ where $c$ is normally distributed with standard deviation~0.1. The 0.5-level contours of the original and transformed images are shown in Fig.~\ref{fig:10blobs1}, and their signatures in Fig.~\ref{fig:10blobs2}. The signature is extremely sensitive to tiny changes in the image, but not to M\"obius transformations.
\end{Example}

\begin{figure}[t!]\centering
\includegraphics[width=8cm]{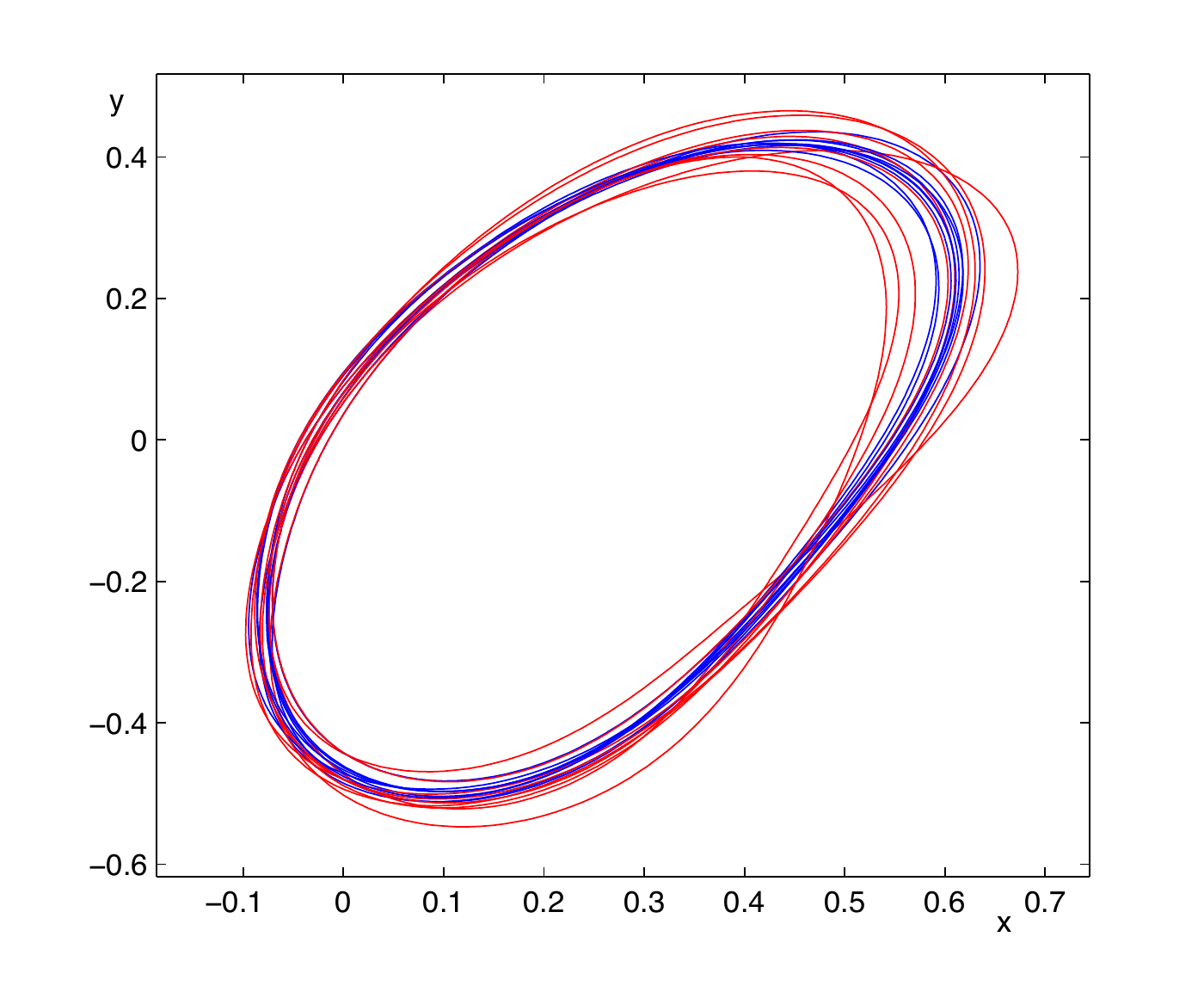}
\caption{The 0.5-level contour of 9 very similar blob-like images are shown in blue, and of their M\"obius transformations in red. Only the central $80\times 80$ portion of the $161\times 161$ images are shown.}\label{fig:10blobs1}
\end{figure}

\begin{figure}[t!]\centering
\includegraphics[width=8cm]{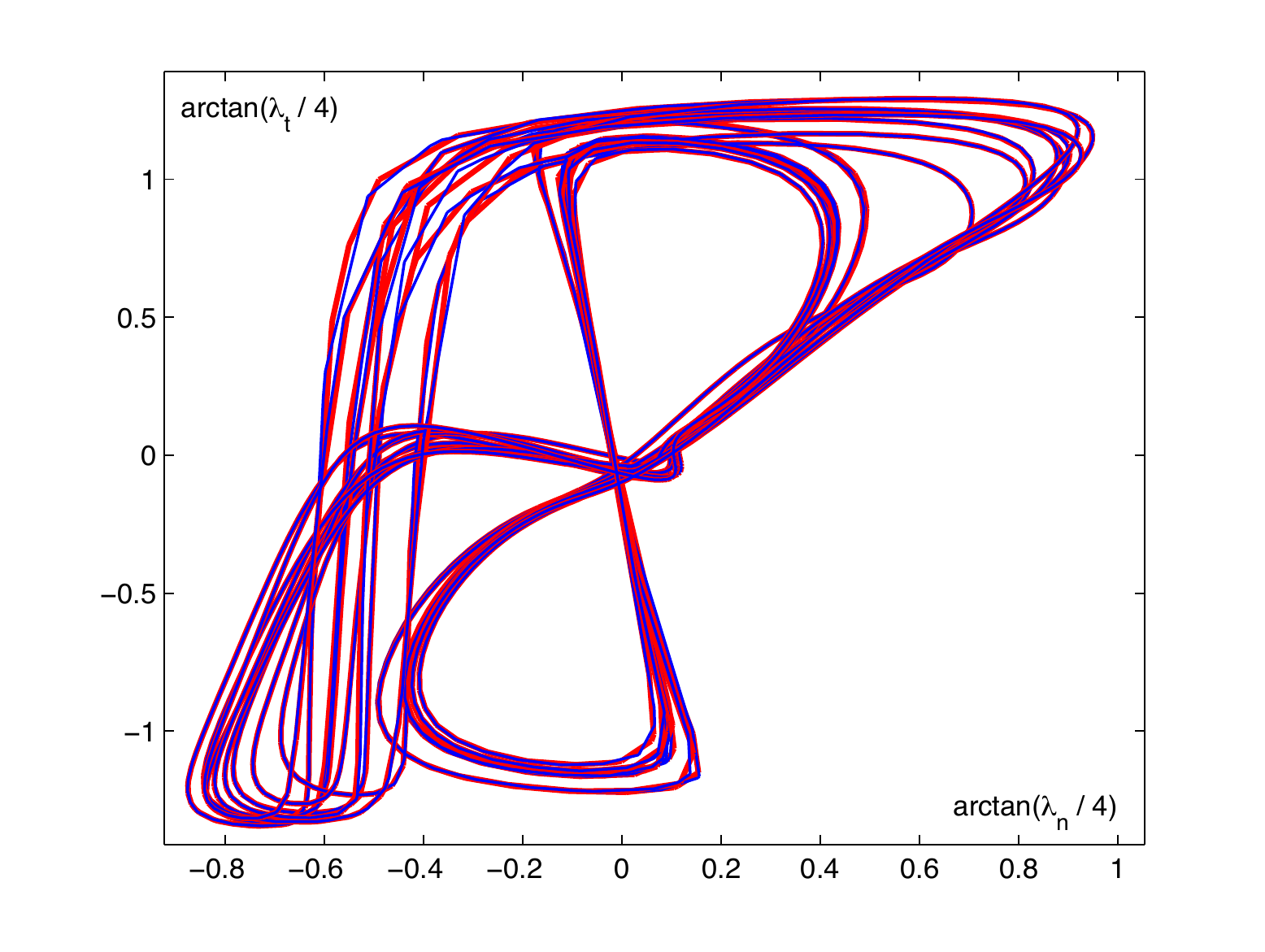}
\caption{The invariant signature $(\arctan(\lambda_t/4),\arctan(\lambda_n/4))$ evaluated on the level set $f^{-1}(0.5)$ is shown for each of the images in Fig.~\ref{fig:10blobs1} (blue) and for their M\"obius transformations (red).}\label{fig:10blobs2}
\end{figure}

We do not have a full understanding of the properties of this invariant signature with respect to the criteria listed in Section~\ref{sec:invariants}. It is certainly fast, small, local, and lacks redundancy and suppression. It has a good numerical approximation on smooth (or smoothed) images. Is it complete? That is, given an image, does its signature surface determine the image up to a M\"obius transformation? Suppose we are given a small piece of signature surface, parameterized by~$(u,v)$, say. We are given three functions $\tilde f(u,v)$, $\tilde \lambda_n(u,v)$, and $\tilde \lambda_t(u,v)$, and need to determine (by solving three PDEs) three functions $f(x,y)$ (the image), $x(u,v)$, and $y(u,v)$ (the coordinates). Typically, the solution of these PDEs will be determined by some boundary data. This suggests that distinct images with the same signature are parameterized by functions of~1 variable; a~kind of near completeness that may be good enough in practice.

Although very sensitive, the fact that it is not continuous at critical points means that it does not have good discrimination in the sense of Section \ref{sec:invariants}. (It falls into the `more false negatives' region of Fig.~\ref{fig:diagram}.) Near nondegenerate critical points, the signature blows up in a well-def\/ined way, so it is possible that there exists a~metric on signatures that leads to robustness and good discrimination.

\section{Conclusion}

In this paper we have developed M\"{o}bius invariants of both curves and images, and proposed computational methods to evaluate both, demonstrating them on a variety of examples. In Section \ref{sec:invariants} we identif\/ied a set of properties that are important for invariants, principally that there was a small set of invariants that were quick to compute, numerically stable, robust (so that noisy versions of the same curve have similar invariants) and yet suf\/f\/iciently discriminatory (so that dif\/ferent objects have dif\/ferent invariants).

While dif\/ferential invariants are not generally robust when dealing with noise, they of\/fer good discrimination and are cheap to compute; this leads us to the M\"{o}bius arc-length. The cross-ratio is more robust, but requires a large set of points to be evaluated, and blows up as the pairs of points approach each other. In order to make this reparameterization-invariant, we used a Fourier transform. This lead to a method of computing M\"{o}bius invariants that satisf\/ies the properties that we have outlined, as is demonstrated in the numerical experiments, see particularly Fig.~\ref{fig:distancecorr}.

For images, the extra information means that it is possible to compute a relatively simple three-dimensional signature based on the function value at each point together with two functions of the M\"{o}bius arclength, along and perpendicular to level sets of the image intensity. It is computationally cheap, extremely sensitive to non-M\"obius changes in the image, but insensitive to M\"obius transformations of the image.

\subsection*{Acknowledgements}
This research was supported by the Marsden Fund, and RM by a James Cook Research Fellowship, both administered by the Royal Society of New Zealand. SM would like to thank the Erwin Schr\"odinger International Institute for Mathematical Physics, Vienna, where some of this research was performed.

\pdfbookmark[1]{References}{ref}
\LastPageEnding


\begin{thebibliography}{99}
\footnotesize\itemsep=0pt

\bibitem{abu1984recognitive}
Abu-Mostafa Y.S., Psaltis D., Recognitive aspects of moment invariants,
 \href{http://dx.doi.org/10.1109/TPAMI.1984.4767594}{\textit{IEEE Trans. Pattern Anal. Machine Intell.}} \textbf{6} (1984),
 698--706.

\bibitem{aghayan2014planar}
Aghayan R., Ellis T., Dehmeshki J., Planar numerical signature theory applied
 to object recognition, \href{http://dx.doi.org/10.1007/s10851-013-0427-6}{\textit{J.~Math. Imaging Vision}} \textbf{48} (2014),
 583--605.

\bibitem{ahlfors1988cross}
Ahlfors L.V., Cross-ratios and {S}chwarzian derivatives in {${\bf R}^n$}, in
 \href{http://dx.doi.org/10.1007/978-3-0348-9158-5_1}{Complex Analysis}, Editors J.~Hersch, A.~Huber, Birkh\"auser, Basel, 1988,
 1--15.

\bibitem{ames2002three}
Ames A.D., Jalkio J.A., Shakiban C., Three-dimensional object recognition using
 invariant {E}uclidean signature curves, in Analysis, Combinatorics and
 Computing, Nova Sci. Publ., Hauppauge, NY, 2002, 13--23.

\bibitem{aastrom1994fundamental}
{\AA}str{\"o}m K., Fundamental dif\/f\/iculties with projective normalization of
 planar curves, in Applications of Invariance in Computer Vision,
 \href{http://dx.doi.org/10.1007/3-540-58240-1_11}{\textit{Lecture Notes in Computer Science}}, Vol.~825, Springer, Berlin~--
 Heidelberg, 1994, 199--214.

\bibitem{bandeira2014saving}
Bandeira A.S., Cahill J., Mixon D.G., Nelson A.A., Saving phase: injectivity
 and stability for phase retrieval, \href{http://dx.doi.org/10.1016/j.acha.2013.10.002}{\textit{Appl. Comput. Harmon. Anal.}}
 \textbf{37} (2014), 106--125, \href{http://arxiv.org/abs/1302.4618}{arXiv:1302.4618}.

\bibitem{barrett2007cauchy}
Barrett D.E., Bolt M., Cauchy integrals and {M}\"obius geometry of curves,
 \href{http://dx.doi.org/10.4310/AJM.2007.v11.n1.a6}{\textit{Asian~J. Math.}} \textbf{11} (2007), 47--53.

\bibitem{bauer2013overview}
Bauer M., Bruveris M., Michor P.W., Overview of the geometries of shape spaces
 and dif\/feomorphism groups, \href{http://dx.doi.org/10.1007/s10851-013-0490-z}{\textit{J.~Math. Imaging Vision}} \textbf{50}
 (2014), 60--97, \href{http://arxiv.org/abs/1305.1150}{arXiv:1305.1150}.

\bibitem{calabi1998differential}
Calabi E., Olver P.J., Shakiban C., Tannenbaum A., Haker S., Dif\/ferential and
 numerically invariant signature curves applied to object recognition,
 \href{http://dx.doi.org/10.1023/A:1007992709392}{\textit{Int.~J. Comput. Vis.}} \textbf{26} (1998), 107--135.

\bibitem{edelsbrunner2008persistent}
Edelsbrunner H., Harer J., Persistent homology~-- a survey, in Surveys on
 Discrete and Computational Geometry, \href{http://dx.doi.org/10.1090/conm/453/08802}{\textit{Contemp. Math.}}, Vol.~453, Amer.
 Math. Soc., Providence, RI, 2008, 257--282.

\bibitem{feng2010classification}
Feng S., Kogan I., Krim H., Classif\/ication of curves in 2{D} and 3{D} via
 af\/f\/ine integral signatures, \href{http://dx.doi.org/10.1007/s10440-008-9353-9}{\textit{Acta Appl. Math.}} \textbf{109} (2010),
 903--937, \href{http://arxiv.org/abs/0806.1984}{arXiv:0806.1984}.

\bibitem{fridman2000numerical}
Fridman B., Kuchment P., Lancaster K., Lissianoi S., Mogilevsky M., Ma D.,
 Ponomarev I., Papanico\-laou~V., Numerical harmonic analysis on the hyperbolic
 plane, \href{http://dx.doi.org/10.1080/00036810008840889}{\textit{Appl. Anal.}} \textbf{76} (2000), 351--362.

\bibitem{Gauthier2008}
Gauthier J.P., Smach F., Lema{\^{\i}}tre C., Miteran J., Finding invariants of
 group actions on function spaces, a general methodology from non-abelian
 harmonic analysis, in \href{http://dx.doi.org/10.1007/978-3-540-69532-5_10}{Mathematical Control Theory and Finance}, Springer,
 Berlin, 2008, 161--186.

\bibitem{ghorbel1994complete}
Ghorbel F., A complete invariant description for gray-level images by the
 harmonic analysis approach, \href{http://dx.doi.org/10.1016/0167-8655(94)90037-X}{\textit{Pattern Recognition Lett.}} \textbf{15}
 (1994), 1043--1051.

\bibitem{glaunes2008large}
Glaun{\`e}s J., Qiu A., Miller M.I., Younes L., Large deformation dif\/feomorphic
 metric curve mapping, \href{http://dx.doi.org/10.1007/s11263-008-0141-9}{\textit{Int.~J. Comput. Vis.}} \textbf{80} (2008),
 317--336.

\bibitem{hann2002projective}
Hann C.E., Hickman M.S., Projective curvature and integral invariants,
 \href{http://dx.doi.org/10.1023/A:1020617228313}{\textit{Acta Appl. Math.}} \textbf{74} (2002), 177--193.

\bibitem{hickman2012euclidean}
Hickman M.S., Euclidean signature curves, \href{http://dx.doi.org/10.1007/s10851-011-0303-1}{\textit{J.~Math. Imaging Vision}}
 \textbf{43} (2012), 206--213.

\bibitem{hoff2013extensions}
Hof\/f D.J., Olver P.J., Extensions of invariant signatures for object
 recognition, \href{http://dx.doi.org/10.1007/s10851-012-0358-7}{\textit{J.~Math. Imaging Vision}} \textbf{45} (2013), 176--185.

\bibitem{kakarala2012bispectrum}
Kakarala R., The bispectrum as a source of phase-sensitive invariants for
 {F}ourier descriptors: a group-theoretic approach, \href{http://dx.doi.org/10.1007/s10851-012-0330-6}{\textit{J.~Math. Imaging
 Vision}} \textbf{44} (2012), 341--353, \href{http://arxiv.org/abs/0902.0196}{arXiv:0902.0196}.

\bibitem{lenz1990group}
Lenz R., Group theoretical methods in image processing, \href{http://dx.doi.org/10.1007/3-540-52290-5}{\textit{Lecture Notes
 in Computer Science}}, Vol.~413, Springer-Verlag, Berlin, 1990.

\bibitem{manay2006integral}
Manay S., Cremers D., Hong B., Yezzi A.J., Soatto S., Integral invariants for
 shape matching, \href{http://dx.doi.org/10.1109/TPAMI.2006.208}{\textit{IEEE Trans. Pattern Anal. Machine Intell.}}
 \textbf{28} (2006), 1602--1618.

\bibitem{marsland2013geodesic}
Marsland S., McLachlan R.I., Modin K., Perlmutter M., Geodesic warps by
 conformal mappings, \href{http://dx.doi.org/10.1007/s11263-012-0584-x}{\textit{Int.~J. Comput. Vis.}} \textbf{105} (2013),
 144--154, \href{http://arxiv.org/abs/1203.3982}{arXiv:1203.3982}.

\bibitem{michor1980manifolds}
Michor P.W., Manifolds of dif\/ferentiable mappings, \textit{Shiva Mathematics
 Series}, Vol.~3, Shiva Publishing Ltd., Nantwich, 1980.

\bibitem{michor2007overview}
Michor P.W., Mumford D., An overview of the {R}iemannian metrics on spaces of
 curves using the {H}amiltonian approach, \href{http://dx.doi.org/10.1016/j.acha.2006.07.004}{\textit{Appl. Comput. Harmon. Anal.}}
 \textbf{23} (2007), 74--113, \href{http://arxiv.org/abs/math.DG/0605009}{math.DG/0605009}.

\bibitem{milnor2010growth}
Milnor J.W., The geometry of growth and form, {T}alk given at the IAS,
 Princeton, 2010, available at \url{http://www.math.sunysb.edu/~jack/gfp-print.pdf}.

\bibitem{Mumford98}
Mumford D., Pattern theory and vision, in Questions Math\'ematiques En
 Traitement Du Signal et de L'Image, Chapter~3, Institute Henri Poincar\'{e},
 Paris, 1998, 7--13.

\bibitem{o2010m}
O'Hara J., Solanes G., M\"obius invariant energies and average linking with
 circles, \href{http://dx.doi.org/10.2748/tmj/1429549579}{\textit{Tohoku Math.~J.}} \textbf{67} (2015), 51--82,
 \href{http://arxiv.org/abs/1010.3764}{arXiv:1010.3764}.

\bibitem{olver2000moving}
Olver P.J., Moving frames and singularities of prolonged group actions,
 \href{http://dx.doi.org/10.1007/s000290050002}{\textit{Selecta Math.~(N.S.)}} \textbf{6} (2000), 41--77.

\bibitem{olver01joint}
Olver P.J., Joint invariant signatures, \href{http://dx.doi.org/10.1007/s10208001001}{\textit{Found. Comput. Math.}}
 \textbf{1} (2001), 3--67.

\bibitem{olver2001moving}
Olver P.J., Moving frames~-- in geometry, algebra, computer vision, and
 numerical analysis, in Foundations of Computational Mathematics ({O}xford,
 1999), \textit{London Math. Soc. Lecture Note Ser.}, Vol.~284, Cambridge
 University Press, Cambridge, 2001, 267--297.

\bibitem{Olver2005}
Olver P.J., A survey of moving frames, in Computer Algebra and Geometric
 Algebra with Applications, \href{http://dx.doi.org/10.1007/11499251_11}{\textit{Lecture Notes in Computer Science}}, Vol.~3519, Springer, Berlin~-- Heidelberg, 2005, 105--138.

\bibitem{olver15groupoid}
Olver P.J., The symmetry groupoid and weighted signature of a geometric object,
 \textit{J.~Lie Theory} \textbf{26} (2016), 235--267.

\bibitem{Patterson1928}
Patterson B.C., The dif\/ferential invariants of inversive geometry,
 \href{http://dx.doi.org/10.2307/2370610}{\textit{Amer.~J. Math.}} \textbf{50} (1928), 553--568.

\bibitem{petukhov1989non}
Petukhov S.V., Non-{E}uclidean geometries and algorithms of living bodies,
 \href{http://dx.doi.org/10.1016/0898-1221(89)90248-4}{\textit{Comput. Math. Appl.}} \textbf{17} (1989), 505--534.

\bibitem{shakiban2004signature}
Shakiban C., Lloyd P., Signature curves statistics of {DNA} supercoils, in
 Geometry, Integrability and Quantization, Softex, Sof\/ia, 2004, 203--210.

\bibitem{shakiban2005classification}
Shakiban C., Lloyd P., Classif\/ication of signature curves using latent semantic
 analysis, in Computer Algebra and Geometric Algebra with Applications,
 \href{http://dx.doi.org/10.1007/11499251_13}{\textit{Lecture Notes in Computer Science}}, Vol.~3519, Springer, Berlin~--
 Heidelberg, 2005, 152--162.

\bibitem{taylor1990noncommutative}
Taylor M.E., Noncommutative harmonic analysis, \href{http://dx.doi.org/10.1090/surv/022}{\textit{Mathematical Surveys and
 Monographs}}, Vol.~22, Amer. Math. Soc., Providence, RI, 1986.

\bibitem{thompson1942growth}
Thompson D.W., On growth and form, Cambridge University Press,
 Cambridge, England, 1942.

\bibitem{turski2004geometric}
Turski J., Geometric {F}ourier analysis of the conformal camera for active
 vision, \href{http://dx.doi.org/10.1137/S0036144502400961}{\textit{SIAM Rev.}} \textbf{46} (2004), 230--255.

\bibitem{turski2005geometric}
Turski J., Geometric {F}ourier analysis for computational vision,
 \href{http://dx.doi.org/10.1007/s00041-004-0972-x}{\textit{J.~Fourier Anal. Appl.}} \textbf{11} (2005), 1--23.

\bibitem{turski2006computational}
Turski J., Computational harmonic analysis for human and robotic vision
 systems, \href{http://dx.doi.org/10.1016/j.neucom.2005.12.091}{\textit{Neurocomputing}} \textbf{69} (2006), 1277--1280.

\bibitem{van1995vision}
Van~Gool L., Moons T., Pauwels E., Oosterlinck A., Vision and {L}ie's approach
 to invariance, \href{http://dx.doi.org/10.1016/0262-8856(95)99715-D}{\textit{Image Vision Comput.}} \textbf{13} (1995), 259--277.

\bibitem{arthur2006d}
Wallace A., D'{A}rcy {T}hompson and the theory of transformations,
 \href{http://dx.doi.org/10.1038/nrg1835}{\textit{Nature Rev. Genet.}} \textbf{7} (2006), 401--406.

\end{thebibliography}
\end{document}